\newcommand{\blue}[1]{{#1}}
\DeclareExpandableDocumentCommand{\floor}{m}
{
	\fp_eval:n { floor ( #1 ) }
}
\lstdefinestyle{mystyle}{
	breakatwhitespace=false,         
	breaklines=true,                 
	captionpos=b,                    
	keepspaces=true,                 
	numbers=none,                    
	numbersep=5pt,                  
	showspaces=false,                
	showstringspaces=false,
	showtabs=false,                  
	tabsize=2
}
\newcommand{\PreserveBackslash}[1]{\let\temp=\\#1\let\\=\temp}
\newcolumntype{C}[1]{>{\PreserveBackslash\centering}p{#1}}
\newcolumntype{R}[1]{>{\PreserveBackslash\raggedleft}p{#1}}
\newcolumntype{L}[1]{>{\PreserveBackslash\raggedright}p{#1}}
\DeclareMathOperator{\conv}{conv}
\DeclareMathOperator{\vect}{vec}
\DeclareMathOperator{\nz}{nz}
\DeclareMathOperator{\midd}{mid}
\DeclareMathOperator{\set}{set}
\DeclareMathOperator{\abs}{abs}
\newtheorem{theorem}{Theorem}%
\newtheorem{lemma}{Lemma}
\newtheorem{proposition}{Proposition}
\theoremstyle{definition}
\newtheorem{definition}{Definition}
\newtheorem*{setting*}{Setting}
\theoremstyle{remark}
\newtheorem{remark}{Remark}
\theoremstyle{plain}
\newtheorem{assumption}{Assumption}
\newcommand{\ceq}{\colonequals}
\newcommand{\Mod}[1]{\ (\mathrm{mod}\ #1)}
\newcommand{\norms}{the $\ell_1$ norm and $\ell_\infty$ norm }
\newcommand{\cifar}{CIFAR-10}
\newcommand{\cl}{\mathrm{cl}}
\newcommand{\cq}{\colonequals}
\newcommand{\bv}[1]{\boldsymbol{#1}} %
\newcommand{\mc}[1]{\mathcal{#1}}
\newcommand{\mv}[2]{\mathcal{#1}^{(#2)}}
\newcommand{\ttt}[1]{\text{\textnormal{#1}}}
\newcommand{\loss}{\mathcal{L}}
\newcommand{\ball}{\mathrm{B}}
\newcommand{\sgd}{\operatorname{SGD}}
\newcommand{\ccite}[1]{\cite{#1}}
\newcommand{\ccitet}[1]{\citet{#1}}
\title{Large Norms of CNN Layers Do Not Hurt Adversarial Robustness}
\author{
    Youwei Liang\textsuperscript{\rm 1}, Dong Huang\textsuperscript{\rm 1,2}\thanks{Corresponding author.}%
    \\
}
\begin{document}

\maketitle

\begin{abstract}
Since the Lipschitz properties of convolutional neural networks (CNNs) are widely considered to be related to adversarial robustness, we theoretically characterize the $\ell_1$ norm and $\ell_\infty$ norm of 2D multi-channel convolutional layers and provide efficient methods to compute the exact $\ell_1$ norm and $\ell_\infty$ norm. Based on our theorem, we propose a novel regularization method termed norm decay, which can effectively reduce the norms of convolutional layers and fully-connected layers. Experiments show that norm-regularization methods, including norm decay, weight decay, and singular value clipping, can improve generalization of CNNs. However, they can slightly hurt adversarial robustness. Observing this unexpected phenomenon, we compute the norms of layers in the CNNs trained with three different adversarial training frameworks and surprisingly find that adversarially robust CNNs have comparable or even larger layer norms than their non-adversarially robust counterparts. 
Furthermore, we prove that under a mild assumption, adversarially robust classifiers can be achieved using neural networks, and an adversarially robust neural network can have an arbitrarily large Lipschitz constant. For this reason, enforcing small norms on CNN layers may be neither necessary nor effective in achieving adversarial robustness. The code is available at \url{https://github.com/youweiliang/norm_robustness}. 
\end{abstract}

\section{Introduction}
Convolutional neural networks (CNNs) have enjoyed great success in computer vision \cite{lecun2015deep, goodfellow2016deep}. However, many have found that CNNs are vulnerable to adversarial attack \cite{akhtar2018threat,eykholt2018robust, huang2017adversarial, moosavi2016deepfool, moosavi2017universal}.
For example, changing one pixel in an image \blue{may} change the prediction of a CNN \citep{su2019one}. 
Many \blue{researchers} link the vulnerability of CNNs to their Lipschitz properties and the common belief is that CNNs with small Lipschitz constants are more robust against adversarial attack \cite{szegedy2013intriguing, cisse2017parseval, bietti2019kernel, anil2019sorting, virmaux2018lipschitz, fazlyab2019efficient}. Since computing the Lipschitz constants of CNNs is intractable \cite{virmaux2018lipschitz}, existing approaches seek to regularize the norms of individual CNN layers. For example, \citet{cisse2017parseval} proposed Parseval Network where the $\ell_2$ norms of linear and convolutional layers are constrained to be orthogonal. However, from Table~1 in their paper, we can see Parseval Network only slightly improves adversarial robustness in most cases and even reduces robustness in some cases. \citet{anil2019sorting} \blue{combined GroupSort, which is} a gradient norm preserving activation function, with norm-constrained weight matrices regularization to enforce Lipschitzness in fully-connected networks while maintaining the expressive power of the models. %
\citet{li2019preventing} further extended GroupSort to CNNs by proposing Block Convolution Orthogonal Parameterization (BCOP), which restricts the linear transformation matrix of a convolutional kernel to be orthogonal and thus its $\ell_2$ norm is bounded by 1. Again, we find that the improvement of adversarial robustness is typically small while the standard accuracy drops considerably. For example, we use the state-of-the-art adversarial ``Auto Attack'' \cite{croce2020reliable} to test the checkpoint from the authors\footnote{\url{https://github.com/ColinQiyangLi/LConvNet}} and find that, the robust accuracy of their best model on CIFAR-10 is 8.4\% (under standard $\ell_\infty$ attack with $\epsilon=8/255$), which is much smaller than the state of the art (~59.5\%\footnote{\url{https://github.com/fra31/auto-attack}}) such as the methods of \cite{carmon2019unlabeled, wang2019improving, pang2020boosting}, while the standard accuracy drops to 72.2\%. 
Besides, since GroupSort and BCOP have virtually changed the forward computation and/or architecture of the network, it is \emph{unclear} whether their improvement in adversarial robustness is due to regularization of norms or the change in computation/architecture. 
These issues raise concerns over the effectiveness of regularization of norms. %

The approaches of regularization of norms are motivated by the idea that reducing norms of individual layers can reduce global Lipschitz constant and reducing global Lipschitz constant can ensure smaller local Lipschitz constants and thus improve robustness. 
In this paper, we carefully investigate the connections and distinctions between the norms of layers, local Lipschitz constants, and global Lipschitz constants. And our findings, both theoretically and empirically, do not support the prevailing idea that large norms are bad for adversarial robustness. 

Our contribution in this paper is summarized as follows.
\begin{itemize}
	\item We theoretically characterize \norms of 2D multi-channel convolutional layers. \blue{To our knowledge, }our approach is the fastest among the existing methods for computing norms of convolutional layers. 
	\item We present a novel regularization method termed norm decay, which can improve generalization of CNNs.
	\item We \emph{prove} that robust classifiers can be realized with neural networks. \blue{Further,} our theoretical results and extensive experiments suggest that large norms (compared to norm-regularized networks) of CNN layers do not hurt adversarial robustness.
\end{itemize}

\section{Related Work}
Researches related to the norms of convolutional layers are mostly concerned with the $\ell_2$ norm. %
For example, \citet{miyato2018spectral} reshape the 4D convolutional kernel into a 2D matrix and use power iterations to compute the $\ell_2$ norm of the matrix. Although this method can improve the image quality produced by WGAN \cite{arjovsky2017wasserstein}, the norm of the reshaped convolutional kernel does not reflect the true norm of the kernel. 
Based on the observation that the result of power iterations can be computed through gradient back-propagation, \citet{virmaux2018lipschitz} proposed AutoGrad %
to compute the $\ell_2$ norm. 
\citet{sedghi2018the} theoretically analyzed the circulant patterns in the unrolled convolutional kernel, based on which they discovered a new approach to compute the singular values of the kernels. Using the computed spectrum of convolution, they proposed singular value clipping, a regularization method which projects a convolution onto the set of convolutions with bounded $\ell_2$ norms. %
It is worth noting that, because of the equivalence of the matrix norms, i.e., $1/\sqrt m \|A\|_{1}\leq \|A\|_{2}\leq \sqrt n \|A\|_{1}$ and $1/\sqrt n \|A\|_{\infty}\leq \|A\|_{2}\leq \sqrt m \|A\|_{\infty}$ for all matrices $A\in \mathbb{R}^{m\times n}$, our approaches to compute the $\ell_1$ and $\ell_\infty$ norm have the same functionalities as those to compute $\ell_2$ norm, while our approaches are much more efficient. 
\citet{gouk2018regularisation} give an analysis on the $\ell_1$ and $\ell_\infty$ norm of convolutional layers but they neglect the padding and strides of convolution, which may lead to incorrect computation results. %

All these works have not yet given a clear analysis of how the norms of neural net layers are related to adversarial robustness. To bridge this gap, we first characterize the norms of CNN layers and then analyze theoretically and test empirically if large norms are bad for adversarial robustness.

\section{The $\ell_1$ and $\ell_\infty$ Norm of Convolutional Layers}
To understand how norms of CNN layers influence adversarial robustness, we first need to characterize the norms. 
\citet{sedghi2018the} proposed a method for computing the singular values of convolutional layers, where the largest one is the $\ell_{2}$ norm. However, their method applies to only the case when the stride of convolution is 1, and computing singular values with their algorithm is still computationally expensive and prohibit its usage in large scale deep learning. To alleviate these problems, we theoretically analyze the $\ell_{1}$ norm and $\ell_{\infty}$ norm of convolutional layer, and we find that our method of computing norms is much more efficient than that of \cite{sedghi2018the}. %

\begin{figure*}
	\centering
	\begin{tabular}{C{0.3\textwidth}C{0.7\textwidth}}
		\begin{tabular}{cc}
			\begin{tabular}{c}
				\smallskip
				\begin{subfigure}{0.16\textwidth}
					\centering
					\begin{tikzpicture}[scale=0.35, every node/.style={scale=0.74}]
					\draw[fill=white]  (0, 0) rectangle (5,5);
					\draw[fill=white]  (0.5, 0.5) rectangle (5.5,5.5);
					\draw[step=1,black,shift={(0.5,0.5)}] %
					(1,1) grid (6,6);
					\foreach \t in {1,2,...,5}
					{
						\foreach \r in {1,2,...,5}
						{
							\node at (\t+1, 7-\r) {{\floor{(\r-1)*5 + \t}}};
						}
					}
					\node at (1,1) {\rotatebox{45}{$\cdots$}};
					\end{tikzpicture}
					\caption{3D input channels}
					\label{fig:input channel}
				\end{subfigure}
				\\
				\begin{subfigure}{0.16\textwidth}
					\centering
					\begin{tikzpicture}[scale=0.35, every node/.style={scale=0.8}]
					\draw  (0.3,3) -- (0,3) -- (0, 0) -- (3,0) -- (3,0.3);
					\draw  (0.9,3.3) -- (0.3,3.3) -- (0.3, 0.3) -- (3.3,0.3) -- (3.3,0.9);
					\draw[step=1,black,shift={(-0.1,-0.1)}] %
					(1,1) grid (4,4);
					\foreach \t in {1,2,3}
					{
						\foreach \r in {1,2,3}
						{
							\node at (\t+0.4, 4.4-\r) {{\floor{(\r-1)*3 + \t}}};
						}
					}
					\node[scale=0.8] at (0.6,0.6) {\rotatebox{45}{{$\cdots$}}};
					\end{tikzpicture}
					\caption{3D output channels}
					\label{fig:output channel}
				\end{subfigure}
			\end{tabular}
			&
			\begin{subfigure}{0.14\textwidth}
				\centering
				\begin{tikzpicture}[scale=0.43, every node/.style={scale=0.69}]
				\draw  (0.3,3) -- (0,3) -- (0, 0) -- (3,0) -- (3,0.3);
				\draw  (0.9,3.3) -- (0.3,3.3) -- (0.3, 0.3) -- (3.3,0.3) -- (3.3,0.9);
				\draw[step=1,black,shift={(-0.1,-0.1)}] %
				(1,1) grid (4,4);
				\foreach \t in {1,2,3}
				{
					\foreach \r in {1,2,3}
					{
						\draw[fill=green!8] (\t+0.4, 4.4-\r) circle (0.35);
						\node at (\t+0.4, 4.4-\r) {{\floor{(\r-1)*3 + \t}}};
					}
				}
				\node at (0.6,0.6) {\rotatebox{45}{{$\cdots$}}};
				\end{tikzpicture}
				\captionsetup{width=.8\linewidth,font=small,labelfont=small}
				\caption{A 3D ``slice'' (with shape $d_{in} \times 3\times 3$) of a 4D convolutional kernel of shape $d_{out} \times d_{in} \times 3\times 3$}
				\label{fig:conv kernel}
			\end{subfigure}
		\end{tabular}	
		&
		\begin{subfigure}{0.65\textwidth}
			\centering
			\begin{tikzpicture}[scale=0.62, every node/.style={scale=0.75}]
			\draw[step=.5,black] %
			(0,-0.75) grid (13.25,4.5);
			\draw[black,thick] (0,0) rectangle (12.5,4.5);
			\foreach \t in {1,2,...,26}
			{
				\node at (\t/2-0.25, 4.75) {\small{\t}};
			}
			\node at (27/2-0.25, 4.75) {\small{$\cdots$}};
			\foreach \t in {1,2,...,10}
			{
				\node at (-0.25, 4.75-\t/2) {\small{\t}};
			}
			\node at (-0.25, 4.75-11/2) {\small{$\vdots$}};
			
			\foreach \t in {1,2,...,9}
			{
				\foreach \r in {0,1,...,8}
				{
					
					\draw[fill=green!8] (\floor{(\t-1)/3}+\t/2-1/2-0.25+2-\floor{\r/3}-\r/2+4.5, \r/2+0.25) circle (0.2);
					\node at (\floor{(\t-1)/3}+\t/2-1/2-0.25+2-\floor{\r/3}-\r/2+4.5, \r/2+0.25) {\small{\t}};
				}
			}
			
			\node at (0.5, 5.65) {\textbf{Input}};
			\node at (-1, 3.75) {\rotatebox{90}{\textbf{Output}}};
			
			\draw[decoration={calligraphic brace,amplitude=5pt,aspect=0.35}, decorate, line width=1pt]
			(-0.45, 0.1) -- (-0.45, 4.4);
			\node at (-1.1, 1.6) {\rotatebox{90}{Channel $1$}};
			\node at (-0.95, -0.5) {$\vdots$};
			
			\draw[decoration={calligraphic brace,amplitude=5pt}, decorate, line width=1pt]
			(0.1, 5) -- (12.4, 5);
			\node at (6.25, 5.7) {{Channel $1$}};
			\node at (12.6, 5.7) {{Channel $2 \, \cdots$}};
			\end{tikzpicture}
			\captionsetup{width=.85\linewidth,font=small,labelfont=small}
			\caption{The upper left part of the linear transformation matrix of the 2D multi-channel convolutional layer for (a) (b) (c) with stride 1 and no padding. {With padding and various strides, the pattern of the linear transformation matrix is more complicated, but these have been properly addressed in our theorem}.}
			\label{fig:conv matrix}
		\end{subfigure}	
	\end{tabular}
	\caption{An illustration of the linear transformation matrix of a convolutional layer.}
	\label{fig:conv}
\end{figure*}
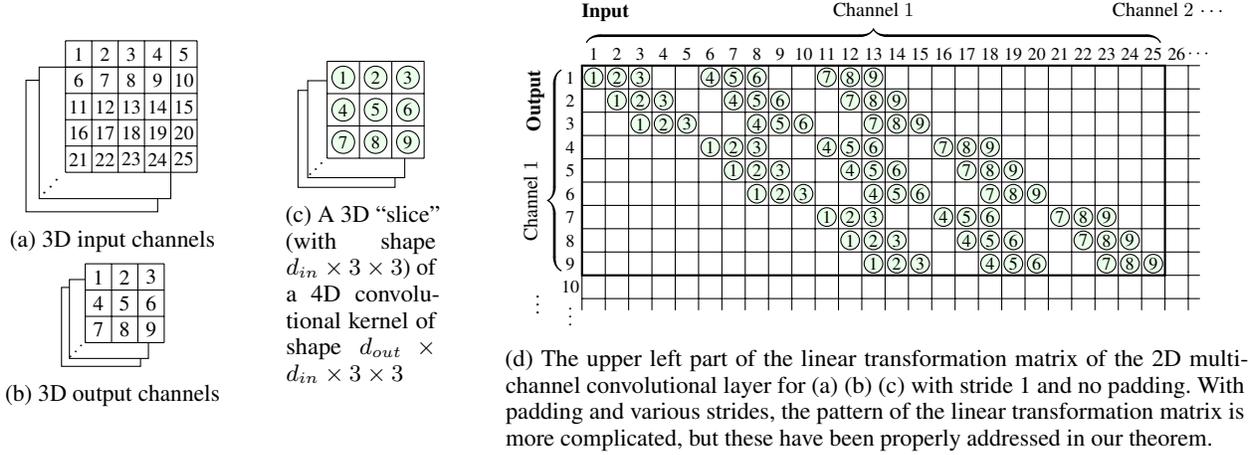

Since 2D multi-channel convolutional layers (Conv2d) \citep{goodfellow2016deep} are arguably the most widely used convolutional layers in practice, we analyze Conv2d in this paper while the analysis for other types of convolutional layer should be similar. 
\begin{setting*}
	Let $\conv \colon \mathbb{R}^{d_{in} \times h_{in} \times w_{in}} \to \mathbb{R}^{d_{out} \times h_{out} \times w_{out}}$ be a 2D multi-channel convolutional layer with a 4D kernel $K \in \mathbb{R}^{d_{out} \times d_{in} \times k_1 \times k_2}$, where $d$ is the channel dimension, $h$ and $w$ are the spatial dimensions of images, and $k_1$ and $k_2$ are the kernel size. Suppose the vertical stride of $\conv$ is $s_1$ and horizontal stride is $s_2$, and padding size is $p_1$ and $p_2$. 
\end{setting*}
We first note that Conv2d without bias is a linear transformation, which can be verified by checking $\conv(\alpha x) = \alpha \conv(x)$ and $\conv(x + y) = \conv(x) + \conv(y)$ for any $\alpha \in \mathbb{R}$ and any tensors $x$ and $y$ with appropriate shape. 
Normally, the input and output of Conv2d are 3D tensors (e.g., images) while the associated linear transformation takes 1D vectors as input. So we reshape the input into a vector (only reshaping the input channel \emph{excluding padding} since padding elements are not variables) and then Conv2d can be represented by $\conv(x) = M x + b$, where $M$ is the linear transformation matrix and $b$ is the bias vector. Then the norm of Conv2d is just the norm of $M$. 
We first state the following well known facts about the norms of a matrix $A \in \mathbb{R}^{m \times n}$:
$\|A\|_1 =\max _{1\leq j\leq n}\sum _{i=1}^{m}|A_{ij}|$, $\|A\|_{\infty}=\max _{1\leq i\leq m}\sum _{j=1}^{n}|A_{ij}|$, and $\|A\|_{2}=\sigma _{\max }(A)$, 
where $\sigma_{\max }(A)$ is the largest singular value of $A$. 
While the exact computation of $M$ is complicated, we can analyze how the norm $\| M \|_{p}$ is related to the convolutional kernel $K$, which is a 4D tensor in the case of Conv2d. 

By carefully inspecting how the output elements of Conv2d are related to the input elements, we find $M$ is basically like the matrix in Figure~\ref{fig:conv matrix}. The rows of $M$ can be formed by convolving a 3D ``slice'' (see Figure~\ref{fig:conv kernel}) of the 4D kernel with the 3D input channels and inspecting which elements on the input channels are being convolved with the 3D kernel slice. If the stride of convolution is 1, $M$ is indeed a doubly circulant matrix like the one in Figure~\ref{fig:conv matrix} \cite{goodfellow2016deep, sedghi2018the}. However, when the stride is not 1 or there is padding in the input channel, the patterns in $M$ could be much more complicated, which is not addressed in existing \emph{analytical} formulas \cite{gouk2018regularisation, sedghi2018the}. We take stride and padding into account and properly address these issues. 
To obtain a theoretical result of the Lipschitz properties of Conv2d, we present the following assumption, which basically means that the convolutional kernel can be completely covered by the input channel (excluding padding) during convolution. 
We emphasize that the assumption holds for most convolutional layers used in practice. 

\begin{assumption} \label{assum}
	Let $c_1$ and $c_2$ be the smallest positive integers such that $c_1 s_1 \geq p_1$ and $c_2 s_2 \geq p_2$. Assume $k_1 + c_1 s_1 - p_1 \leq h_{in}$ and $k_2 + c_2 s_2 - p_2 \leq w_{in}$, and the padding (if any) for the input of $\conv$ is zero padding.
\end{assumption}
We need the following lemma to present our formula to compute the $\ell_1$ norm of Conv2d. 
The overall idea of the lemma is that it links the nonzero elements of every column of $M$ to the elements in the convolutional kernel, which 
is very useful because the $\ell_{1}$ norm of $M$ is exactly the maximum of the absolute column sum of $M$. 
\begin{lemma}\label{lem:kernel}
	Suppose Assumption~1 holds. The indices set for the last two dimensions of $K$ is $\mathcal{N} \ceq \{(k,t) \colon 1 \leq k \leq k_1, 1 \leq t \leq k_2\}$.
	Let $\sim$ be a binary relation on $\mathcal{N}$ such that, if indices $(a, b)$ and $(c, d)$ satisfy $(a - c) \equiv 0 \Mod{s_1}$ and $(b - d) \equiv 0 \Mod{s_2}$, then $(a, b) \sim (c, d)$. 
	Let $\mathcal{A}_{(a,b)} \subseteq \mathcal{N}$ denote the largest set\footnote{By largest set we mean adding any other indices to $\mathcal{A}_{(a,b)}$ would violate the conditions that follow.} of indices such that $(a, b) \in \mathcal{A}_{(a,b)}$ and for all $(c, d) \in \mathcal{A}_{(a,b)}$, $(c, d) \sim (a,b)$ and $ 0 \leq c - a \leq h_{in} + 2p_1 - k_1$ and $ 0 \leq d - b \leq w_{in} + 2p_2 - k_2$. Let $\mathcal{S}$ be a set of indices sets defined as $\mathcal{S} \ceq \{\mathcal{A}_{(a,b)} \colon (a,b) \in \mathcal{N}\}$. 
	Let $M_{:,n}$ be the $n$-th column of the linear transformation matrix $M$ of $\conv$, and let $\nz(M_{:,n})$ be the set of nonzero elements of $M_{:,n}$. 
	Then for $n = 1, 2, \dots, d_{in} h_{in} w_{in}$, there exists an indices set $\mathcal{A} \in \mathcal{S}$ such that $\nz(M_{:,n}) \subseteq \{K_{i,j, k, t} \colon 1 \leq i \leq d_{out}, (k, t) \in \mathcal{A}\}$, where $j = \lceil n / (h_{in} w_{in}) \rceil$.
	Furthermore, for $j = 1,2, \dots, d_{in}$, for all $\mathcal{A} \in \mathcal{S}$, there exists a column $M_{:,n}$ of $M$, where $(j-1)h_{in}w_{in} < n \leq j h_{in}w_{in} $, such that $\nz(M_{:,n}) \supseteq \{K_{i,j, k, t} \colon 1 \leq i \leq d_{out}, (k, t) \in \mathcal{A}\}$.
\end{lemma}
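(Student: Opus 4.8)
The plan is to make the matrix $M$ explicit at the level of individual entries, reduce the two-dimensional bookkeeping to two independent one-dimensional problems, and then read off the two inclusions directly from the index arithmetic, invoking Assumption~1 only to secure the second (superset) inclusion.

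First I would fix indexing conventions. Writing the $n$-th input coordinate as a triple $(j,r,c)$ with $j=\lceil n/(h_{in}w_{in})\rceil$ the input channel and $(r,c)$ its spatial location, I pass to padded coordinates $(R,C)=(r+p_1,c+p_2)$, so the genuine (unpadded) input occupies $p_1+1\le R\le p_1+h_{in}$ and $p_2+1\le C\le p_2+w_{in}$. From the definition of strided convolution, the output in channel $i$ at location $(P,Q)$ receives the contribution $K_{i,j,k,t}$ from padded input $(R,C)$ exactly when $(P-1)s_1+k=R$ and $(Q-1)s_2+t=C$ with $1\le P\le h_{out}$, $1\le Q\le w_{out}$, $(k,t)\in\mathcal{N}$. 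Hence the entry of $M$ in row $(i,P,Q)$ and column $n$ equals $K_{i,j,k,t}$ for the unique such $(k,t)$ when it exists and is $0$ otherwise, so $\nz(M_{:,n})=\{K_{i,j,k,t}\colon 1\le i\le d_{out},\,(k,t)\in\mathcal{K}(R,C)\}$, where $\mathcal{K}(R,C)$ collects the kernel positions that can cover $(R,C)$. The two defining equations decouple, so $\mathcal{K}(R,C)=\mathcal{K}_1(R)\times\mathcal{K}_2(C)$ with $\mathcal{K}_1(R)=\{k\colon 1\le k\le k_1,\ k\equiv R\Mod{s_1},\ 0\le R-k\le(h_{out}-1)s_1\}$ and $\mathcal{K}_2$ analogous. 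This separation lets me argue in each spatial direction independently.

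For the subset inclusion I observe that every $k\in\mathcal{K}_1(R)$ satisfies $k\equiv R\Mod{s_1}$, so all of $\mathcal{K}_1(R)$ lies in one residue class modulo $s_1$, and likewise $\mathcal{K}_2(C)$ modulo $s_2$; thus $\mathcal{K}(R,C)$ sits inside a single $\sim$-class. Taking $(a,b)=(\min\mathcal{K}_1(R),\min\mathcal{K}_2(C))$ and using $0\le R-k\le(h_{out}-1)s_1$, I bound the span of $\mathcal{K}_1(R)$ by $(h_{out}-1)s_1\le h_{in}+2p_1-k_1$ (the last inequality being $h_{out}=\lfloor(h_{in}+2p_1-k_1)/s_1\rfloor+1$), and similarly in the horizontal direction. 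These are exactly the constraints $0\le c-a\le h_{in}+2p_1-k_1$ and $0\le d-b\le w_{in}+2p_2-k_2$ defining $\mathcal{A}_{(a,b)}$, so $\mathcal{K}(R,C)\subseteq\mathcal{A}_{(a,b)}\in\mathcal{S}$ and the first claim follows.

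For the superset inclusion I would, given $\mathcal{A}=\mathcal{A}_{(a,b)}\in\mathcal{S}$ and any channel $j$, construct a column whose covering set contains $\mathcal{A}$. By separability it suffices to produce an unpadded row $r$ with $\mathcal{K}_1(r+p_1)\supseteq\{k\colon (k,\cdot)\in\mathcal{A}\}$; the horizontal direction is identical. Writing the first coordinates of $\mathcal{A}$ as $\{a+is_1\colon 0\le i\le m\}$, the requirement becomes finding $R\equiv a\Mod{s_1}$ with $R\in[a+ms_1,\ a+(h_{out}-1)s_1]$ and $R\in[p_1+1,\ p_1+h_{in}]$. Here Assumption~1 enters: the choice of $c_1$ with $c_1s_1\ge p_1$ together with $k_1+c_1s_1-p_1\le h_{in}$ guarantees that the output position $P=c_1+1$ has its entire receptive field inside the genuine input (its padded rows $c_1s_1+1,\dots,c_1s_1+k_1$ lie in $[p_1+1,p_1+h_{in}]$), which is precisely what forces the two intervals above to overlap in a point of the lattice $a+s_1\mathbb{Z}$. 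Combining the resulting $R$ with the analogous $C$ yields a column realizing $\nz(M_{:,n})\supseteq\{K_{i,j,k,t}\colon 1\le i\le d_{out},\,(k,t)\in\mathcal{A}\}$.

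The routine part is the entrywise description of $M$ and the residue/span computation behind the subset inclusion. The main obstacle is the superset inclusion: one must show that Assumption~1 leaves enough room for the required interior column, i.e.\ that the valid window $[a+ms_1,a+(h_{out}-1)s_1]$ and the genuine-input window $[p_1+1,p_1+h_{in}]$ always share a point congruent to $a$ modulo $s_1$. Verifying this overlap through the padding/stride translation, especially in corner cases where both endpoints of the window are clipped by the input boundary, is where the bookkeeping is most delicate, and it is exactly there that each hypothesis of Assumption~1 is used.
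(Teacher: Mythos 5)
Your plan is sound and, underneath the different bookkeeping, it is the same argument as the paper's: both directions rest on the observation that the kernel entries touching a fixed input pixel form a lattice pattern (one residue class per spatial axis, with span bounded by the total travel of the kernel), and the superset direction rests on finding a pixel deep enough inside the unpadded input that the whole pattern passes over it. What you do differently is organizational: you write down the entrywise formula for $M$ in padded coordinates and decouple the problem into two one-dimensional ones via $\mathcal{K}(R,C)=\mathcal{K}_1(R)\times\mathcal{K}_2(C)$, whereas the paper tracks kernel positions $P_{\cdot,\cdot}$ geometrically and handles both axes at once. Your decoupling is arguably cleaner and makes the subset direction essentially immediate (the span bound $(h_{out}-1)s_1\le h_{in}+2p_1-k_1$ is exactly the paper's $r_1 s_1$); note only that $\mathcal{A}_{(a,b)}$ with $(a,b)=(\min\mathcal{K}_1(R),\min\mathcal{K}_2(C))$ is, by maximality, the full product of two arithmetic progressions, so containment does follow.

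The one place you stop short is the crux of the superset direction: you reduce it to the claim that $[a+ms_1,\,a+(h_{out}-1)s_1]$ and $[p_1+1,\,p_1+h_{in}]$ share a point of the lattice $a+s_1\mathbb{Z}$, and then assert that Assumption~1 forces this. That assertion is true but needs a witness, and it is not enough to point at the output position $P=c_1+1$ alone: the left endpoint $a+ms_1$ can fall below $p_1+1$ (e.g.\ a singleton $\mathcal{A}$ with $a=1$ and $p_1\ge 1$), so one must sometimes slide up the lattice. The explicit witness is $R=\max(c_1s_1+a,\,a+ms_1)$, which is precisely the paper's $a_{\midd}=\max(a_{\min},a_{\max}-c_1s_1)$ read in your padded coordinates: $R\equiv a\Mod{s_1}$; $R\ge a+ms_1$ by construction; $R\le a+(h_{out}-1)s_1$ because both $c_1s_1$ and $ms_1$ are multiples of $s_1$ not exceeding $h_{in}+2p_1-k_1$ (for $c_1 s_1$ this uses $k_1+c_1s_1-p_1\le h_{in}$); and $p_1+1\le R\le p_1+h_{in}$ because $c_1s_1\ge p_1$, $a\ge 1$, $c_1s_1+a\le c_1s_1+k_1\le p_1+h_{in}$, and $a+ms_1\le k_1\le p_1+h_{in}$. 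With that line added (and its horizontal twin), your proof is complete and equivalent to the paper's.
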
 
Now we are ready to show how to calculate the norms of Conv2d. 

\begin{theorem} \label{thm:norm}
	Suppose Assumption~\ref{assum} holds. Then \norms and an upper bound of the $\ell_{2}$ norm of $\conv$ are given by
	\begin{align}
	&\| \conv \|_{1} = \max_{1 \leq j \leq d_{in}} \max_{\mathcal{A} \in \mathcal{S}} \sum_{(k,t) \in \mathcal{A}} \sum_{i=1}^{d_{out}} | K_{i,j,k,t} | , \label{eq:1norm} \\
	&\| \conv \|_{\infty} = \max_{1 \leq i \leq d_{out}} \sum_{j=1}^{d_{in}} \sum_{k=1}^{k_1} \sum_{t=1}^{k_2} | K_{i,j,k,t} | , \label{eq:inf-norm} \\
	&\| \conv \|_{2} \leq \bigg(h_{out} w_{out} \sum_{i=1}^{d_{out}} \sum_{j=1}^{d_{in}} \sum_{k=1}^{k_1} \sum_{t=1}^{k_2} | K_{i,j,k,t} |^2\bigg)^{\frac{1}{2}} \label{eq:2norm}
	\end{align}
	where $\mathcal{S}$ is a set of indices sets defined in Lemma~\ref{lem:kernel}.
\end{theorem}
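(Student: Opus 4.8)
The plan is to reduce each operator-norm formula to a statement about the entries of the linear transformation matrix $M$ and then read the answer off the combinatorial structure of $M$. Recall the three elementary facts quoted just before the theorem: $\|M\|_1$ is the maximum absolute column sum, $\|M\|_\infty$ is the maximum absolute row sum, and $\|M\|_2=\sigma_{\max}(M)$. I would handle the three bounds separately, since the first leans entirely on Lemma~\ref{lem:kernel}, the second requires a dual (row-wise) version of that analysis, and the third is a short Frobenius-norm estimate.

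For \eqref{eq:1norm} I would sandwich $\|M\|_1$ from both sides using Lemma~\ref{lem:kernel}. For the upper bound, fix a column $M_{:,n}$ with input channel $j=\lceil n/(h_{in}w_{in})\rceil$; the first half of the lemma gives an $\mathcal{A}\in\mathcal{S}$ with $\nz(M_{:,n})\subseteq\{K_{i,j,k,t}: 1\le i\le d_{out},\,(k,t)\in\mathcal{A}\}$. Since each index triple $(i,k,t)$ occupies a distinct row, the absolute column sum is at most $\sum_{(k,t)\in\mathcal{A}}\sum_{i}|K_{i,j,k,t}|$, hence at most the right-hand side of \eqref{eq:1norm}. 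For the matching lower bound, the second half of the lemma supplies, for every $j$ and every $\mathcal{A}\in\mathcal{S}$, a column whose nonzero entries contain $\{K_{i,j,k,t}:(k,t)\in\mathcal{A}\}$, so its absolute column sum is at least $\sum_{(k,t)\in\mathcal{A}}\sum_i|K_{i,j,k,t}|$. Maximizing over $j$ and $\mathcal{A}$ gives the reverse inequality, and the two bounds coincide.

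For \eqref{eq:inf-norm} I would run the analogous argument on rows. Each row of $M$ corresponds to one output entry $\conv(x)_{i,p,q}$, a weighted sum of the input values in its receptive field; because padding pixels are fixed zeros and are excluded from the input vector, the nonzero entries of the row form a subset of $\{K_{i,j,k,t}: 1\le j\le d_{in},\,1\le k\le k_1,\,1\le t\le k_2\}$, with every kernel weight appearing exactly once when the full window lands on genuine (unpadded) input. This immediately bounds each absolute row sum by $\sum_{j,k,t}|K_{i,j,k,t}|$. The crux is the reverse direction: I must exhibit, for each output channel $i$, an output location whose entire $k_1\times k_2$ window lies inside the unpadded image, so that all $d_{in}k_1k_2$ weights actually appear. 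This is exactly what Assumption~\ref{assum} buys: at the output position indexed by $c_1,c_2$, the window starts at input coordinate $c_1 s_1-p_1\ge 0$ and ends at $c_1 s_1-p_1+k_1-1\le h_{in}-1$ (and likewise horizontally, using $k_1+c_1 s_1-p_1\le h_{in}$ and its horizontal analogue), so no padding is touched and the row realizes the full sum. Maximizing over $i$ yields \eqref{eq:inf-norm}. I expect this existence-of-a-fully-covered-output step to be the main obstacle, since it is where the stride/padding bookkeeping of Assumption~\ref{assum} is genuinely used and where a naive analysis that ignores padding would fail.

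For the $\ell_2$ bound \eqref{eq:2norm} I would use $\|M\|_2\le\|M\|_F$ together with a multiplicity count. Every entry of $M$ equals some weight $K_{i,j,k,t}$, and a fixed weight can contribute at most once per output spatial location, hence at most $h_{out}w_{out}$ times across all of $M$. Therefore $\|M\|_F^2=\sum_{m,n}M_{mn}^2\le h_{out}w_{out}\sum_{i,j,k,t}|K_{i,j,k,t}|^2$, and taking square roots gives \eqref{eq:2norm}. The only point needing care here is this multiplicity bound, which follows from the same input-to-output correspondence underlying the other two norms.
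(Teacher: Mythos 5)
Your proposal is correct and follows essentially the same route as the paper's proof: the $\ell_1$ formula via the two-sided containment of Lemma~\ref{lem:kernel} (your direct sandwich is just a cleaner phrasing of the paper's argument over the family of column/kernel index sets), the $\ell_\infty$ formula via the row-wise analysis with Assumption~\ref{assum} guaranteeing a fully interior window at the position indexed by $c_1, c_2$, and the $\ell_2$ bound via $\|M\|_2\le\|M\|_{\mathrm{F}}$ with each kernel weight appearing at most $h_{out}w_{out}$ times. No gaps.
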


The proofs of Lemma~\ref{lem:kernel} and Theorem~\ref{thm:norm} are lengthy and deferred to the Appendix.

\section{Do Large Norms Hurt Adversarial Robustness?}
Many works mentioned in the Introduction regularize the norms of layers to improve robustness, while some authors \cite{sokolic2017robust, weng2018evaluating, yang2020closer} pointed out that local Lipschitzness is what really matters to adversarial robustness. In the setting of neural networks, the relations and distinctions between global Lipschitzness, local Lipschitzness, and the norms of layers are unclear. We devote this section to investigate their connections. For completeness, we provide the definition of Lipschitz constant.
\begin{definition}[Global and local Lipschitz constant] \label{def:lip const}
	Given a function $f \colon \mathcal{X} \to \mathcal{Y}$, where $\mathcal{X}$ and $\mathcal{Y}$ are two  finite-dimensional normed spaces equipped with norm $\|\cdot\|_{p}$, the global Lipschitz constant of $f$ is defined as
	\begin{equation}
		\|f\|_{p} \ceq \sup_{x_1, x_2 \in \mathcal{X}} \frac{\|f(x_{1}) - f(x_{2})\|_{p}}{\|x_{1} - x_{2}\|_{p}}.
	\end{equation}
	We call $\|f\|_{p}$ a local Lipschitz constant on a compact space $\mathcal{V} \subset \mathcal{X}$ if $x_1$ and $x_2$ are confined to $\mathcal{V}$. In the context of neural nets, the norm is usually the $\ell_{1}$, $\ell_{2}$, or $\ell_{\infty}$ norm.
\end{definition}

To deduce the prevailing claim that large norms hurt adversarial robustness, one must go through the following reasoning: large norms of layers $\to$ large global Lipschitz constant of the network $\to$ large local Lipschitz constant in the neighborhood of samples $\to$ the output of the network changes so sharply around samples that the prediction is changed $\to$ reducing adversarial robustness. 
However, there are at least two serious issues at the first and second arrow in the above reasoning. The first issue is that large norms of individual layers do not necessarily cause the global Lipschitz constant of the network to be large, as demonstrated in the following proposition. 
\begin{proposition}\label{prop:coupling}
	There exists a feedforward network with ReLU activation where the norms of all layers can be arbitrarily large while the Lipschitz constant of the network is 0. 
\end{proposition}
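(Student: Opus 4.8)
The plan is to construct an explicit two-layer ReLU network whose layer weights can be scaled up without bound, yet whose composition is identically constant (hence Lipschitz constant $0$). The key idea is \emph{cancellation}: I will arrange the first layer so that every pre-activation is forced negative on the entire input space, so that the ReLU kills everything and the output is a fixed constant regardless of the input. Once the output map is constant, the numerator $\|f(x_1)-f(x_2)\|_p$ is identically zero for all $x_1,x_2$, so the supremum defining the global Lipschitz constant in Definition~\ref{def:lip const} is $0$, while the individual layer norms (say, computed via the $\ell_1$ or $\ell_\infty$ formulas) can be made as large as we wish.

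Concretely, I would take input space $\mathbb{R}^d$ and a first linear layer $x \mapsto W_1 x + b_1$ with $W_1 = c\,I$ for a large scalar $c>0$, so that $\|W_1\|_p = c$ can be arbitrarily large. The obstacle here is that the pre-activation $cx + b_1$ is \emph{not} bounded above on the unbounded domain $\mathbb{R}^d$, so no fixed bias $b_1$ can force every coordinate negative for all $x$. The clean fix is to restrict attention to a bounded input domain, or — to keep the domain all of $\mathbb{R}^d$ while preserving unbounded layer norms — to precede $W_1$ with a \emph{coupling} layer that maps all inputs into a region where the next layer's ReLU saturates to zero. That is, I would use a first ReLU block to produce a bounded nonnegative signal, then feed it through a second weight matrix $W_2$ (whose norm I again scale up by $c$) together with a bias $b_2$ chosen negative enough that $W_2(\cdot) + b_2 < 0$ on the entire (now bounded) image; the subsequent ReLU then outputs the zero vector, and any fixed final bias yields a constant output. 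This justifies the name ``coupling'' in the proposition label: the large norm of one layer is absorbed by a downstream layer that clamps the signal.

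The remaining steps are routine verifications. First I would write the network as $f = \sigma(W_2\,\sigma(W_1 x + b_1) + b_2)$ (dropping any final affine map or setting it to a constant), check that $\sigma(W_1 x + b_1)$ lies in a compact set $B$ depending only on $b_1$ (because the inner ReLU clamps negatives to $0$ and the construction bounds the positive part), then choose $b_2$ coordinatewise below $-\max_{y\in B}(W_2 y)_\ell$ so that $W_2 y + b_2$ is strictly negative on $B$ componentwise. Applying the outer ReLU gives $\sigma(W_2 y + b_2) = 0$ for every admissible $y$, hence $f \equiv \text{const}$. It follows immediately that $\|f\|_p = 0$ by Definition~\ref{def:lip const}, while by the $\ell_1$/$\ell_\infty$ characterizations (the matrix-norm facts recalled before Theorem~\ref{thm:norm}) the norms $\|W_1\|_p$ and $\|W_2\|_p$ scale linearly in $c$ and are therefore unbounded.

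The main obstacle is the tension between keeping the \emph{domain unbounded} (so the statement is about a genuine $\mathbb{R}^{d}\to\mathbb{R}^{m}$ network) and forcing the signal into a saturating region: a single scaled layer on an unbounded domain cannot be clamped by a bias alone. The essential trick that resolves this is the two-stage construction, where an inner ReLU first compresses the input into a bounded set and only then is the large-norm weight applied and annihilated by the outer ReLU. I expect the rest of the argument to be entirely mechanical once the saturation is set up correctly; the only care needed is to ensure the bound on $B$ is uniform over all of $\mathbb{R}^d$ and does not secretly reintroduce dependence on the scale $c$.
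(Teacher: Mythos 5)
There is a genuine gap at the pivotal step of your construction. You claim that $\sigma(W_1 x + b_1)$ ``lies in a compact set $B$ depending only on $b_1$ (because the inner ReLU clamps negatives to $0$ and the construction bounds the positive part),'' but nothing in your construction bounds the positive part: for any nonzero $W_1$ and any bias $b_1$, the image of $\mathbb{R}^d$ under $x \mapsto \sigma(W_1 x + b_1)$ is unbounded, since ReLU only clips from below. So no choice of $b_2$ can make $W_2 y + b_2$ negative over the entire image, and the outer ReLU does not saturate. You correctly identify this tension in your closing paragraph, but you assert it is resolved without supplying the mechanism. The gap is fixable --- e.g., a bounded ``hat'' signal $\sigma(ct)-\sigma(ct-c)$ can be built from two ReLU units combined by an intermediate linear layer --- but that extra linear combination is precisely the missing idea, and as written the saturation argument fails.

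The paper avoids biases and boundedness entirely and gets the result with a purely structural cancellation: take all weight matrices diagonal, with diagonals $\bv{d}_1,\dots,\bv{d}_L$, and arrange two consecutive layers so that $\bv{d}_j \odot \bv{d}_{j+1} = \bv{0}$ (disjoint supports). Then at every coordinate where $\bv{d}_{j+1}$ is nonzero, the incoming activation is $\sigma(0)=0$ because $\bv{d}_j$ vanishes there, so $\bv{x}_{j+1} = \sigma(\bv{x}_j \odot \bv{d}_{j+1}) \equiv \bv{0}$ for every input, and the network is identically zero while each $\bv{d}_i$ retains at least one arbitrarily large entry. This works on all of $\mathbb{R}^n$, needs no biases, and requires no control of any image set. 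I would recommend either adopting that route or explicitly constructing the bounded intermediate signal before the saturating layer.
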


The proof is deferred to the Appendix. 
Although the network illustrated in the proof of Proposition~\ref{prop:coupling} is a very simple one, it does show that the coupling between layers could make the actual Lipschitz constant of a neural net much smaller than we can expect from the norms of layers. A related discussion of coupling between layers is presented in \cite{virmaux2018lipschitz}. This proposition breaks the logical chain at the first arrow in the above reasoning of large norms hurting adversarial robustness. 
The second issue in the reasoning is that, even if the Lipschitz constant of a neural network is very large, it can still be adversarially robust. This is because, \emph{local} Lipschitzness, which means the output of a network does not change sharply in the neighborhood of samples, is \emph{already sufficient} for adversarial robustness, and it has no requirement on the \emph{global} Lipschitz constant \cite{sokolic2017robust, weng2018evaluating, yang2020closer}. 
In the next paragraph, we will first prove that under a mild assumption, robust classifiers can be achieved with neural networks, and then we will prove that the Lipschitz constant of a \emph{robust} classifier can be arbitrarily large. 

Since we are primarily interested in classification tasks, our discussion will be confined to these tasks. We first need some notations. 
Let $\mc{X} \subset \mathbb{R}^n$ be the instance space (data domain) and $\mc{Y}=\{1,\dots, C\}$ be the (finite) label set where $C$ is the number of classes. Let $\mc{D}$ be the probability measure of $\mc{X}$, i.e., for a subset $A \subset \mc{X}$, $\mc{D}(A)$ gives the probability of observing a data point $x \in A$. Let $\mc{X}$ be endowed with a metric $d$ that will be used in adversarial attack, and let $\ball(x, \epsilon) \cq \{\tilde{x} \colon d(x, \tilde{x}) \leq \epsilon\}$ be the $\epsilon$-neighborhood of $x$. 
Let $f\colon \mc{X} \to \mc{Y}$ denote the underlying labeling function (which we do not know), and let $\mv{X}{c} \subset \mc X$ be the set of class~$c$. The robust accuracy is defined as follows, similar to the ``astuteness'' in \cite{wang2018analyzing, yang2020closer}. 
\begin{definition}[Robust accuracy]
	We say a classifier $g \colon \mathbb{R}^n \to \mathbb{R}$ have robust accuracy $\gamma$ under adversarial attack of magnitude $\epsilon \geq 0$ if $\gamma = \mc{D}\big(\{x \in \mc{X} \colon |g(\tilde{x}) - f(x)| < 0.5 \ttt{ for all } \tilde{x} \in \ball(x, \epsilon)\}\big)$.
\end{definition}
Here, for convenience of proof, we use a classifier that outputs a real number, and its prediction is determined by choosing the nearest label to its output. Thus, if the output of $g$ is at most 0.5 apart from the true label, then $g$ gives the correct label. This definition and the following theorem and proposition can be easily generalized to the widely used classifiers with vectors as outputs. 
{Intuitively, robust accuracy is the probability measure of the set of ``robust points'', which are the points whose $\epsilon$-neighbors can be correctly classified by $g$.} 
Our next theorem shows that, under a mild assumption similar to that in \cite{yang2020closer}, there exits a neural network that can achieve robust accuracy 1 (i.e., the highest accuracy). 
\begin{assumption}[2-epsilon separable]\label{assum:separate}
	The data points of any two different classes are 2-epsilon separable: $\inf\{d(x^{(i)}, x^{(j)}) \colon x^{(i)} \in \mv X i, x^{(j)} \in \mv X j, i\neq j\} > 2\epsilon$.
\end{assumption}
Intuitively, Assumption~\ref{assum:separate} states any two epsilon-balls centered at data points from two different classes do not have overlap. 
We would like to provide an explanation for why the assumption holds for a reasonable attack size $\epsilon$ in computer vision tasks. 
We say the attack size $\epsilon$ is reasonable, if for all $x \in \mc{X}$ and for all $s \in \ball(x, \epsilon)$, the label of $s$ given by humans is the same as that of $x$. 
Thus, if $\epsilon$ is reasonable (as in our definition), the two balls $\ball(x_1, \epsilon)$ and $\ball(x_2, \epsilon)$ for $x_1$ and $x_2$ coming from two different classes would not have overlap, which means the 2-epsilon separable assumption should hold for a \emph{reasonable} $\epsilon$. In our analysis, we do not rely on the number of classes, so the assumption should hold for any number of classes. But we do think in reality, the training of adversarially robust classifiers may be more difficult for larger number of classes because intuitively, the neighborhood $\ball(x, \epsilon)$ of $x$ from different classes are more likely to be close to each other if the number of classes are larger. 

\begin{theorem}[Realizability of robust classifiers]\label{thm:realizability}
	Let $\rho \colon \mathbb{R}\to\mathbb{R}$ be any non-affine continuous function which is continuously differentiable at at least one point, with nonzero derivative at that point. If Assumption~\ref{assum:separate} holds, then there exists a feedforward neural network with $\rho$ being the activation function that has robust accuracy $1$.
\end{theorem}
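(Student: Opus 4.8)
The plan is to reduce the claim to a universal-approximation statement. First I would exhibit an idealized, piecewise-constant target that trivially has robust accuracy $1$; then I would relax it to a genuinely continuous function on a compact set; and finally I would approximate that function to within tolerance $1/2$ by a feedforward network with activation $\rho$, so that the nearest-label decision rule still recovers the correct class on every $\epsilon$-ball.

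First I would turn Assumption~\ref{assum:separate} into a uniform separation of the $\epsilon$-enlarged class regions. Writing $U_c \cq \bigcup_{x \in \mv X c} \ball(x,\epsilon)$ and $\delta \cq \inf\{d(x^{(i)},x^{(j)}) \colon x^{(i)} \in \mv X i,\, x^{(j)} \in \mv X j,\, i \neq j\} - 2\epsilon > 0$, a one-line triangle-inequality argument shows that if $u \in U_c$ and $v \in U_{c'}$ with $c \neq c'$ then $d(u,v) \geq \delta$. Hence the closures $\overline{U_c}$ are pairwise disjoint, and in fact separated by the fixed distance $\delta$.

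Next I would manufacture the continuous surrogate and approximate it. Assuming the data domain $\mc X$ is bounded (so its $\epsilon$-neighborhood, and thus $\bigcup_c \overline{U_c}$, lies in a compact set $\mathcal{K}$), the map sending each $\overline{U_c}$ to the constant value $c$ is continuous on the closed set $\bigcup_c \overline{U_c}$ precisely because the pieces are separated; by the Tietze extension theorem it extends to a continuous $h \colon \mathcal{K} \to \mathbb{R}$ with $h \equiv c$ on $U_c$. I would then invoke a universal-approximation theorem whose hypotheses on the activation are exactly those in the statement---$\rho$ continuous, non-affine, and continuously differentiable with nonzero derivative at some point---such as the deep-narrow approximation result of Kidger and Lyons, to obtain a network $g$ with $\sup_{x \in \mathcal{K}} |g(x) - h(x)| < 1/2$. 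For any $x \in \mc X$ with $f(x) = c$ and any $\tilde x \in \ball(x,\epsilon) \subseteq U_c \subseteq \mathcal{K}$ this gives $|g(\tilde x) - f(x)| = |g(\tilde x) - h(\tilde x)| < 1/2$, so every point of $\mc X$ is robust and the robust accuracy equals $\mc D(\mc X) = 1$.

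I expect the main obstacle to be selecting the correct approximation theorem. The stated condition on $\rho$ does \emph{not} rule out polynomials (for instance $\rho(t) = t^2$ satisfies it), so the classical single-hidden-layer theorem, which requires a non-polynomial activation, is unavailable; I must instead rely on a depth-based universality result that remains valid for this broader class of activations. A secondary technical point is guaranteeing that one fixed network succeeds on a set of full $\mc D$-measure, which I handle by assuming $\mc X$ is bounded so that a single compact $\mathcal{K}$ contains every relevant $\epsilon$-ball; without such boundedness the construction would only yield robust accuracy arbitrarily close to, rather than exactly, $1$.
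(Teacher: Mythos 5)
Your proposal is correct and matches the paper's own argument almost step for step: the paper likewise forms the $\epsilon$-enlarged class regions, shows their closures are pairwise disjoint by the same triangle-inequality contradiction, defines the locally constant (hence continuous) labeling function on the resulting compact set, and invokes Theorem~3.2 of Kidger and Lyons---exactly the depth-based universality result you identified as necessary because the hypothesis on $\rho$ admits polynomials. The only cosmetic differences are that the paper approximates $h$ directly on $\bigcup_c \cl(\mv{R}{c})$ without a Tietze extension and uses tolerance $0.1$ rather than $1/2$.
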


The proof is deferred to the Appendix. 
We notice that \citet{yang2020closer} showed a related result that there exists a function that has small local Lipschitz constants and achieves robust accuracy 1. Our result (Theorem~\ref{thm:realizability}) is different from theirs in that we prove that a neural network that can be realized in a digital computer can obtain robust accuracy 1 while they proved an \emph{abstract} function $f$ can obtain robust accuracy 1, where the definition of $f$ relies on knowing the data distribution $\mc{D}$ and $f$ may not be realized in a digital computer. 
\citet{yang2020closer} also empirically showed that real-world image datasets are typically $2\epsilon$-separable and thus there should exist neural networks that achieve high robust accuracy. 
Using Theorem~\ref{thm:realizability}, we are ready to show that a neural network having robust accuracy 1 can have arbitrarily large Lipschitz constant, as in the following proposition.
\begin{proposition}\label{thm:large}
	Let $\rho \colon \mathbb{R}\to\mathbb{R}$ be any non-affine continuous function which is continuously differentiable at at least one point, with nonzero derivative at that point. 
	If Assumption~\ref{assum:separate} holds, then for all $\xi > 0$, there exists a feedforward neural network with $\rho$ being the activation function that achieves robust accuracy $1$ and its Lipschitz constant is at least $\xi$.
\end{proposition}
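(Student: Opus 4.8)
The plan is to take the robust classifier furnished by Theorem~\ref{thm:realizability} and perturb it by a uniformly tiny but rapidly oscillating network. Let $g\colon\mathbb{R}^n\to\mathbb{R}$ be the feedforward $\rho$-network with robust accuracy $1$ guaranteed by Theorem~\ref{thm:realizability} (its existence already uses Assumption~\ref{assum:separate}). Inspecting its construction, $g$ can be taken to output the exact integer class label on each ball $\ball(x,\epsilon)$ around a data point, so it enjoys a uniform margin: $|g(\tilde x)-f(x)|\le \tfrac12-\mu$ for some $\mu\in(0,\tfrac12]$ and all $\tilde x\in\ball(x,\epsilon)$, for $\mc D$-almost every $x$. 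The key observation is that robust accuracy depends only on the \emph{values} of the classifier on these balls, whereas the global Lipschitz constant of Definition~\ref{def:lip const} is a supremum of difference quotients over all of $\mc X$. Hence any additive perturbation whose magnitude never reaches $\mu$ leaves every robust point robust, yet is free to vary arbitrarily steeply.

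To build such a perturbation, fix a unit (in $\|\cdot\|_p$) direction $v$ and a nondegenerate segment $[a,b]\subset\mc X$ parallel to $v$, and let $K\subseteq\mc X$ be a compact set containing both $[a,b]$ and every data ball $\ball(x,\epsilon)$ (for the bounded instance spaces of image classification one may simply take $K=\mc X$). Consider the target $T(x)\cq\tfrac{\mu}{2}\sin(\omega\langle v,x\rangle)$, with the frequency $\omega$ chosen last. Since $\rho$ satisfies exactly the hypotheses under which Theorem~\ref{thm:realizability} is proved, $\rho$-networks approximate any continuous function uniformly on compacta, so there is a $\rho$-network $h$ with $\sup_{K}|h-T|\le\eta$ for any prescribed $\eta>0$; take $\eta<\mu/4$, giving $|h|\le \mu/2+\eta<\mu$ throughout $K$. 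Then $\hat g\cq g+h$ is again a feedforward $\rho$-network—running $g$ and $h$ in parallel and summing their outputs—and on every data ball $\ball(x,\epsilon)\subseteq K$ we have $|\hat g(\tilde x)-f(x)|\le(\tfrac12-\mu)+|h(\tilde x)|<\tfrac12$, so $\hat g$ still has robust accuracy $1$.

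It remains to force $\|\hat g\|_p\ge\xi$. The delicate point is that uniform approximation of $T$ gives \emph{no} control over the derivative of $h$, so rather than bounding the Lipschitz constant through $T$ I would extract a steep secant. On $[a,b]$ the target $T$ attains $+\mu/2$ and $-\mu/2$ at two points $x_1,x_2$ with $\|x_1-x_2\|_p=\pi/\omega$ (for $\omega$ large these fit inside the segment), whence $h(x_1)\ge\mu/2-\eta$ and $h(x_2)\le-\mu/2+\eta$. Since $g$ is continuous, hence uniformly continuous on $K$ with modulus $m_g$, we obtain
\begin{equation}
\|\hat g\|_p \;\ge\; \frac{|\hat g(x_1)-\hat g(x_2)|}{\|x_1-x_2\|_p} \;\ge\; \frac{(\mu-2\eta)-m_g(\pi/\omega)}{\pi/\omega} \;\xrightarrow{\;\omega\to\infty\;}\;\infty .
\end{equation}
Choosing $\omega$ large enough makes the right-hand side at least $\xi$, which finishes the construction.

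The main obstacle is exactly the tension in the first paragraph: the perturbation must be simultaneously uniformly small, so that not a single robust point is lost and robust accuracy stays $1$, and arbitrarily steep, so that the global Lipschitz constant exceeds the prescribed $\xi$—all while the activation $\rho$ is assumed smooth at only one point. I resolve it by decoupling amplitude from slope with a low-amplitude, high-frequency oscillation, and—because approximation in $C(K)$ gives no grip on derivatives—by certifying the large Lipschitz constant through the two-point secant estimate above rather than through a pointwise derivative bound. The only auxiliary fact I must secure beforehand is the uniform margin $\mu$; this is read off from the proof of Theorem~\ref{thm:realizability}, whose network equals the exact class label on the enlarged regions and therefore has margin $\tfrac12$.
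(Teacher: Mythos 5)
Your proof is correct in substance but takes a genuinely different route from the paper. The paper modifies the \emph{target} function before invoking universal approximation: it appends to the label function two indicator plateaus of heights $-1.1$ and $+0.1$ supported on disjoint cubes $[2,3]^n$ and $[u,4]^n$ lying entirely outside the data domain, approximates within $\zeta=0.1$, and reads off a secant of size at least $1/\|t_1-t_2\|\to\infty$ as $u\to 3$; robustness is untouched simply because the modification lives away from every $\ball(x,\epsilon)$. You instead perturb an already-robust network \emph{post hoc} by a uniformly small, high-frequency oscillation, preserve robustness via a margin argument, and certify steepness by a crest-to-trough secant combined with the modulus of continuity of $g$. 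Your version buys something the paper's does not: the steep behavior can sit inside the data region, so even a \emph{local} Lipschitz constant near the data can be made huge without losing a single robust point --- a sharper illustration of the paper's thesis. The paper's version is leaner: one invocation of the approximation theorem, no margin bookkeeping, no modulus of continuity, and no need to combine two networks. Three slips in your write-up are worth fixing, all minor. First, $K\subseteq\mc{X}$ cannot be right: the balls $\ball(x,\epsilon)$ need not lie in $\mc{X}$, and $\mc{X}$ (e.g.\ a finite dataset) need not contain any nondegenerate segment $[a,b]$; take $K$ to be a compact subset of $\mathbb{R}^n$ containing $\cup_x\ball(x,\epsilon)$ and place the segment wherever convenient, exactly as the paper places its cubes outside $[0,1]^n$ (the Lipschitz constant is taken over $\mathbb{R}^n$ in the paper's own proof). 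Second, the network from Theorem~2 does not output the exact label --- it is within $0.1$ of it --- so the margin is $\mu=0.4$, not $\tfrac12$; this changes nothing. Third, ``run $g$ and $h$ in parallel and sum'' needs a word about depth-matching for a general activation $\rho$; the cleanest fix is to approximate the single target $h_{\text{label}}+T$ on $K$ in one application of the universal approximation theorem, which also removes the need for the modulus-of-continuity term.
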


The proof is deferred to the Appendix. 
Proposition~\ref{thm:large} shows that neural networks that have large Lipschitz constant can be adversarially robust because they can have small local Lipschitz constants \emph{in the instance domain}. This proposition implies that what really matters is the local Lipschitz property of the network instead of the global one. 
\citet{yang2020closer} also stressed the importance of controlling local Lipschitzness of neural nets, by showing a function that has small local Lipschitz constant can achieve robust accuracy 1. 

On the other hand, although enforcing a small global Lipschitz constant can ensure local Lipschitzness, it may reduce the expressive power of the network and hurt standard accuracy. Let us consider fitting the function $f(x) = 1 / x$ in the interval $(0.5, 1)$; then no 1-Lipschitz function could fit it well since the slope of the function in that interval is as large as 4. Thus, enforcing global Lipschitzness may result in hurting standard accuracy a lot while obtaining only a slight improvement in robustness (e.g., as in \cite{li2019preventing}). 
In order to further investigate how norms influence the adversarial robustness in practice, we further propose a novel norm-regularization method in the next section. 

\section{A Regularization Method: Norm Decay}
Equipped with Eq.~\eqref{eq:1norm} and Eq.~\eqref{eq:inf-norm}, we present an algorithm termed norm decay to control (or regularize) the norm of fully-connected layers and convolutional layers. Then we investigate how norm decay influences generalization and adversarial robustness in experiments. 

The norm decay approach is to add a regularization term to the original loss function $\loss(\theta)$, where $\theta$ is the parameter, to form an augmented loss function:
\begin{equation}
	\min_{\theta} \quad \loss(\theta) + \frac{\beta}{N} \sum_{i=1}^{N} \|\theta^{(i)}\|_{p}
\end{equation}
where $\theta^{(i)}$ denotes the linear transformation matrix of the $i$-th layer and $\beta$ is a hyperparameter, and the summation is over all fully-connected layers and convolutional layers. %

Form Eq.~\eqref{eq:1norm} and Eq.~\eqref{eq:inf-norm}, we can see that the $\ell_1$ and $\ell_\infty$ norm depends on only some elements in the kernel, which means the gradient of norm w.r.t. kernel elements ($\nabla_{\theta} \|\theta^{(i)}\|_{p}$) are typically sparse. 
Besides, since the norm is the sum of the absolute values of these elements, the gradient w.r.t. a single kernel element is either 1 or -1 or 0, which makes the computation of gradient very efficient. 
After updating the kernel parameters using an optimizer such as stochastic gradient descent (SGD), the elements that contribute to the norm may become completely different from those before the update (due to the $\max$ operation in Eq.~\eqref{eq:1norm} and Eq.~\eqref{eq:inf-norm}), which could cause non-smoothness (i.e., rapid change) of the gradient $\nabla_{\theta} \|\theta^{(i)}\|_{p}$. To smooth the gradient change and stabilize training, we introduce a momentum $\gamma$ to keep a moving average of the gradient of the norms. The details are shown in Algorithm~\ref{alg:norm-decay}. 

\begin{algorithm}
	\begin{algorithmic}[1]
		\renewcommand{\algorithmicrequire}{\textbf{Input:}}
		\renewcommand{\algorithmicensure}{\textbf{Output:}}
		\Require loss function $\loss$ (assuming it is to be minimized), parameters $\theta$, momentum $\gamma$, regularization parameter $\beta$
		\Ensure parameters $\theta$
		\State $h \gets \mathbf{0}$ (initialize the gradient of norms of layers)
		\Repeat
		\State $g \gets \nabla_{\theta}\loss $
		\State Compute $p$, the gradient of $\ell_1$ or $\ell_{\infty}$ norm of each fully-connected and convolutional layer
		\State $h \gets \gamma \cdot h + (1-\gamma) \cdot p$
		\State $g \gets g + \beta/N \cdot h$
		\State $\theta \gets \sgd(\theta, g)$
		\Until {convergence}
	\end{algorithmic}
	\caption{Norm Decay}
	\label{alg:norm-decay}
\end{algorithm}

\section{Experiments}
Firstly, we show our approaches for computing norms of Conv2d are very efficient. 
In the second part, we conduct extensive experiments to investigate if regularizing the norms of CNN layers is effective in improving adversarial robustness. In the third part, we compare the norms of the layers of adversarially robust CNNs against their non-adversarially robust counterparts. 

\subsection{Algorithmic Efficiency Comparison}
We compare the efficiency of three methods that can compute the exact norms of convolutional layers, including computing the $\ell_2$ norm with power iteration \ccite{virmaux2018lipschitz} and circulant matrix \ccite{sedghi2018the} and computing \norms with Eq.~\eqref{eq:1norm} and Eq.~\eqref{eq:inf-norm}. The result is shown in Table~\ref{tab:timing}, which shows that our approaches are much faster (up to 14,000 times faster) than the others, while our approaches are theoretically and empirically equivalent to the others in computing norms.

\begin{table}[tbp]
	\centering
	\resizebox{\linewidth}{!}{%
		\begin{tabular}{lcccc}
			\toprule
			kernel size & $\ell_2$(VS) & $\ell_2$(SGL) & $\ell_1$(ours) & $\ell_\infty$(ours) \\
			\midrule
			3, 3, 32, 32 & 26.5  & 5.75  & 0.00605 & 0.00576 \\
			3, 3, 32, 128 & 27.4  & 6.92  & 0.00682 & 0.00575 \\
			3, 3, 128, 256 & 29.0  & 98.0  & 0.00576 & 0.00560 \\
			3, 3, 256, 512 & 59.4  & 490 & 0.0117 & 0.00898 \\
			5, 5, 256, 128 & 59.7  & 91.5  & 0.0103 & 0.00729 \\
			5, 5, 512, 256 & 255 & 523 & 0.0239 & 0.0180 \\
			\bottomrule
	\end{tabular}}
	\caption{Computation time (seconds) of 100 runs of computing different norms for various kernels. The experimental setup is shown in the next subsection and the computation is run on GPU. The input image has the same shape as a CIFAR-10 image. The kernel size is represented by (kernel height, kernel width, \# input channels, \# output channels). VS denotes the method of \ccitet{virmaux2018lipschitz} and SGL denotes the method of \ccitet{sedghi2018the}.}
	\label{tab:timing}
\end{table}

\begin{table*}[!thb]
	\centering
	\resizebox{\textwidth}{!}{%
		\begin{tabular}{c|c|c|cccc|cccc|cccc|cccc}
			\toprule
			\multicolumn{1}{c}{} & & plain & \multicolumn{4}{c|}{weight decay} & \multicolumn{4}{c|}{singular value clipping} & \multicolumn{4}{c|}{$\ell_1$ norm decay} & \multicolumn{4}{c}{$\ell_\infty$ norm decay} \\
			\midrule
			model & ACC & --- & $10^{-2}$ & $10^{-3}$ & $10^{-4}$ & $10^{-5}$ & 0.5 & 1.0 & 1.5 & 2.0 & $10^{-2}$ & $10^{-3}$ & $10^{-4}$ & $10^{-5}$ & $10^{-2}$ & $10^{-3}$ & $10^{-4}$ & $10^{-5}$ \\
			\midrule
			\multirow{2}[0]{*}{vgg}&
			Clean & 90.4 & 91.6 & 91.7 & 90.1 & 90.2 & 87.6 & 89.1 & 90.0 & 89.9 & 88.1 & 91.1 & 90.6 & 90.8 & 91.8 & 91.1 & 90.8 & 90.6
			\\
			& Robust & 60.2 & 56.3 & 60.5 & 60.6 & 60.3 & 48.8 & 52.2 & 54.1 & 56.7 & 56.5 & 62.5 & 61.1 & 60.1 & 56.9 & 60.0 & 60.8 & 60.1
			\\
			\midrule
			\multirow{2}[0]{*}{resnet}&
			Clean & 93.2 & 94.3 & 94.1 & 93.1 & 92.7 & 93.6 & 94.0 & 94.2 & 93.8 & 92.5 & 93.4 & 93.5 & 93.4 & 93.0 & 93.8 & 93.1 & 93.0
			\\
			& Robust & 37.0 & 28.2 & 33.7 & 33.9 & 40.9 & 35.2 & 41.7 & 43.2 & 39.8 & 24.5 & 37.7 & 38.3 & 37.5 & 20.0 & 34.7 & 38.9 & 37.6
			\\
			\midrule
			\multirow{2}[0]{*}{senet}&
			Clean & 93.1 & 94.2 & 93.9 & 93.0 & 92.4 & 93.8 & 94.2 & 93.8 & 94.2 & 92.3 & 93.8 & 93.3 & 93.3 & 93.0 & 93.6 & 92.8 & 93.2
			\\
			& Robust & 35.7 & 23.5 & 32.8 & 37.0 & 34.8 & 30.5 & 35.6 & 35.2 & 37.4 & 33.6 & 36.0 & 38.2 & 36.7 & 28.6 & 31.0 & 37.6 & 37.4
			\\
			\midrule
			\multirow{2}[0]{*}{regnet}&
			Clean & 91.8 & 93.6 & 94.4 & 92.3 & 91.3 & 93.9 & 93.4 & 93.0 & 92.4 & 93.7 & 92.3 & 91.6 & 91.9 & 93.4 & 92.0 & 91.8 & 91.9
			\\
			& Robust & 34.8 & 23.7 & 30.3 & 30.0 & 31.0 & 27.7 & 28.8 & 29.0 & 28.8 & 29.2 & 31.1 & 28.1 & 34.3 & 23.2 & 27.7 & 27.9 & 30.6
			\\
			\bottomrule
	\end{tabular}}
	\caption{Comparison of clean accuracy (\%) and robust accuracy (\%) of 4 CNN models trained with different norm-regularization methods on CIFAR-10. The second row corresponds to the values of regularization parameters. Robust accuracy is tested with standard Auto Attack \cite{croce2020reliable} under $\ell_\infty$ metric with $\epsilon=1/255$.}
	\label{tab:acc}
\end{table*}

\subsection{Regularizing Norms Improves Generalization but Can Hurt Adversarial Robustness} \label{sec:reg}
To better understand the effect of regularizing the norm of CNN layers, we conduct experiments with various models on \cifar\ \cite{krizhevsky2009learning}. %
Specially, we use three approaches, including weight decay (WD), singular value clipping (SVC) \cite{sedghi2018the}, and norm decay (ND), to regularize the norms. Here, we only use the norm-regularization methods that do not change the architecture of the network, and thus exclude the GroupSort \cite{anil2019sorting} and BCOP \cite{li2019preventing}. We also exclude the methods that may not regularize the true norms (e.g., reshaping the convolutional kernel into a matrix) such as Parseval Regularization \cite{cisse2017parseval} and \cite{gouk2018regularisation}. 

\noindent\textbf{Experimental setup.} We set the regularization parameter to different values and test generalization and adversarial robustness of the models on test set. In norm decay, we simply set the hyperparameter $\gamma$ (momentum) to 0.5 and test the other hyperparameter $\beta$ in $\{10^{-5}, \dots, 10^{-2}\}$. We also test the regularization parameter of weight decay in $\{10^{-5}, \dots, 10^{-2}\}$ and test SVC by clipping the singular values to $\{2.0, 1.5, 1.0, 0.5\}$, respectively, following the setting in the original paper. 
We use four CNN architectures in our experiments, including VGG-11 \cite{simonyan2014very}, ResNet-18 \cite{he2016deep}, SENet-18 \cite{hu2018squeeze}, and RegNetX-200MF \cite{radosavovic2020designing}.
We use the SGD optimizer with momentum of 0.9 and set the initial learning to 0.01. We train the models for 120 epochs and decay the learning rate by a factor of 0.1 at epoch 75, 90, and 100. After finishing training, we use the state-of-the-art attack ``Auto Attack'' \cite{croce2020reliable} to attack the trained CNNs. The experiments are conducted on a machine a GTX 1080 Ti GPU and an Intel Core i5-9400F 6-core CPU and 32GB RAM. 

The result is shown in Table~\ref{tab:acc}. Since we find that all models trained with WD, SVC, and ND have basically zero robust accuracy under $\ell_\infty$ attack with $\epsilon=8/255$ and $\epsilon=4/255$, we set $\epsilon=1/255$ to see the actual effect of regularizing norms. Because of that, we first conclude that these regularization methods cannot improve adversarial robustness by reducing norms when facing large attack (in the sense of large $\epsilon$). 
From Table~\ref{tab:acc}, we can see that the four regularization methods typically improve generalization. However, as the regularization becomes stronger, the norm of all layers becomes smaller (see Appendix for the changes of norms during training) while the robust accuracy could slightly decrease. 
The reduction in robust accuracy is especially evident when the regularization is the strongest and the norms are the smallest (in the first column of each regularization method in Table~\ref{tab:acc}). 
This result is very surprising and contradicts the prevailing claim that small norms of CNN layers improve robustness \cite{szegedy2013intriguing, cisse2017parseval, anil2019sorting, li2019preventing}. 
We can see that there seems to be a trade-off between standard (clean) accuracy and robust accuracy. When the clean accuracy gets a higher value, the robust accuracy typically gets a lower value. This trade-off has been pointed out by \citet{tsipras2018robustness}, and they proved that the trade-off is inevitable when the distribution of two different classes is ``mixed''. However, \citet{yang2020closer} have shown that the CIFAR-10 training set and test set are both $2\epsilon$-separable for $\epsilon$ much larger than the typical values used in adversarial attack. Therefore, by Theorem~\ref{thm:realizability}, there should exist a neural network that achieves robust accuracy 1 and there should be no \emph{intrinsic} trade-off.  

The reason for this phenomenon may be that regularizing the norms in fact suppresses the power of CNNs to become local Lipschitz. From the results in the last section, we know that large norms do not necessarily result in large local Lipschitz constants. Thus, in an unconstrained parameter space (in the case of no regularization) the network may be able to find a minimizer (w.r.t. the loss) that has better local Lipschitzness. When the parameter space is constrained (due to regularization), the network may need to sacrifice local Lipschitzness to retain standard accuracy, which is the training target. 

Although the proposed norm decay \emph{may slightly} reduce the adversarial robustness, it still serves as a novel and promising regularizer for CNNs in improving standard generalization. %

\subsection{The Norms of Adversarially Robust Networks}
Equipped with our efficient approaches to computing norms of convolutional layers, we further test how the norms of adversarially robust CNNs differ from their non-adversarially robust counterparts. 
Specifically, we use three adversarial training frameworks, namely, PGD-AT \cite{madry2017towards}, ALP \cite{kannan2018adversarial}, and TRADES \cite{zhang2019theoretically} to train the four models, namely, VGG-11, ResNet-18, SENet-18, and RegNetX-200MF. %
The experimental setting is the same as that in the last subsection except the initial learning rate is set to 0.1 by following the setting of \citet{pang2020boosting}. After finishing training, we compute the $\ell_\infty$ norms of all layers in the CNNs with/without adversarial training. The result is shown in Figure~\ref{fig:norm}. We can see that the norms of layers of adversarially robust CNNs are comparable or even larger than their non-adversarially robust counterparts (e.g., the adversarially robust ResNet and SENet have especially larger norms while having much higher robust accuracy than the plain models). Due to space limitation, we put the comparison of the norms of individual layers in the supplementary material.  
These findings consistently show that large norms of CNNs do not hurt adversarial robustness and what really matters is the local Lipschitzness of the networks. 

\begin{figure*}[!thb]
	\begin{center}
		\includegraphics[width=0.5\linewidth]{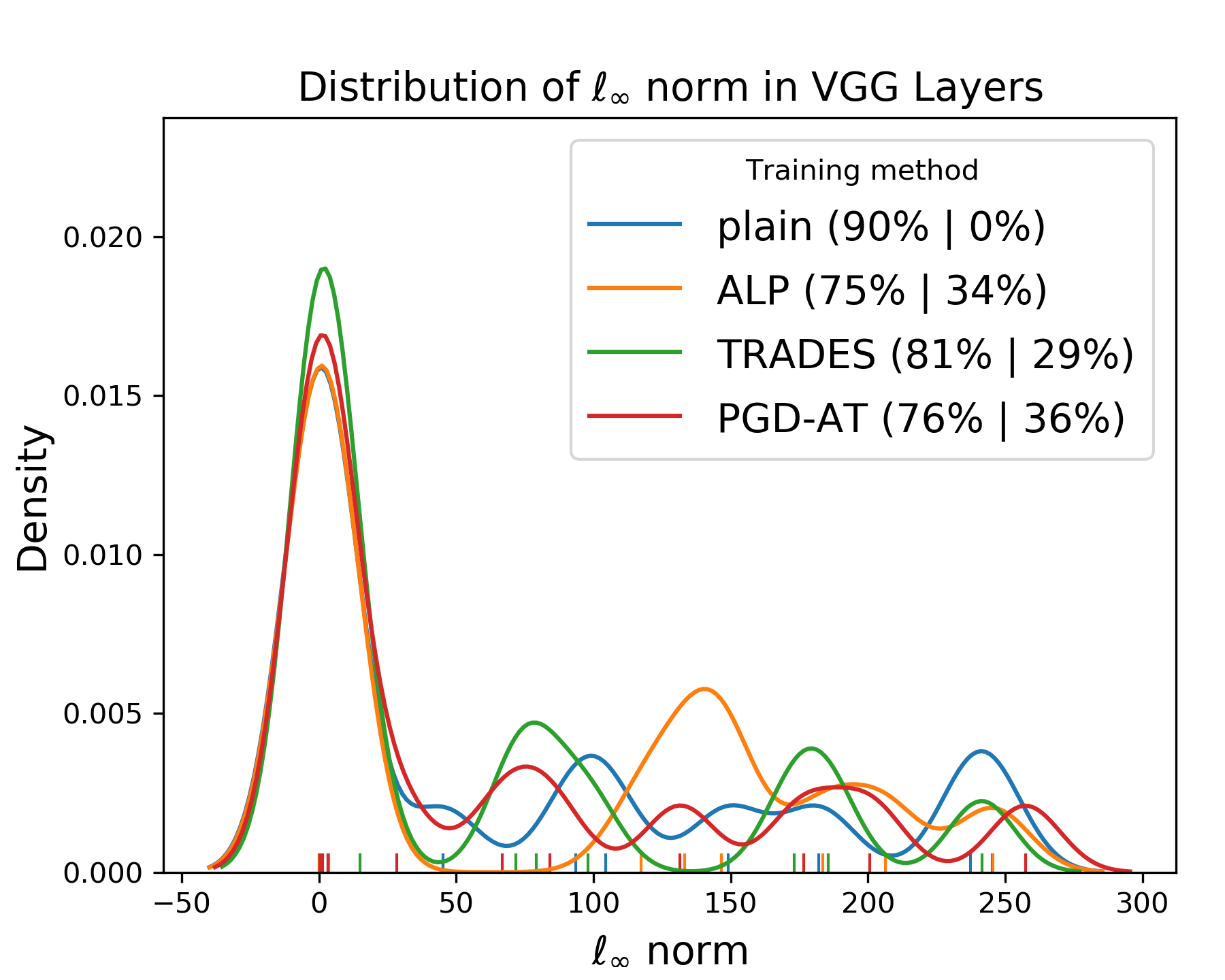}\hfil
		\includegraphics[width=0.5\linewidth]{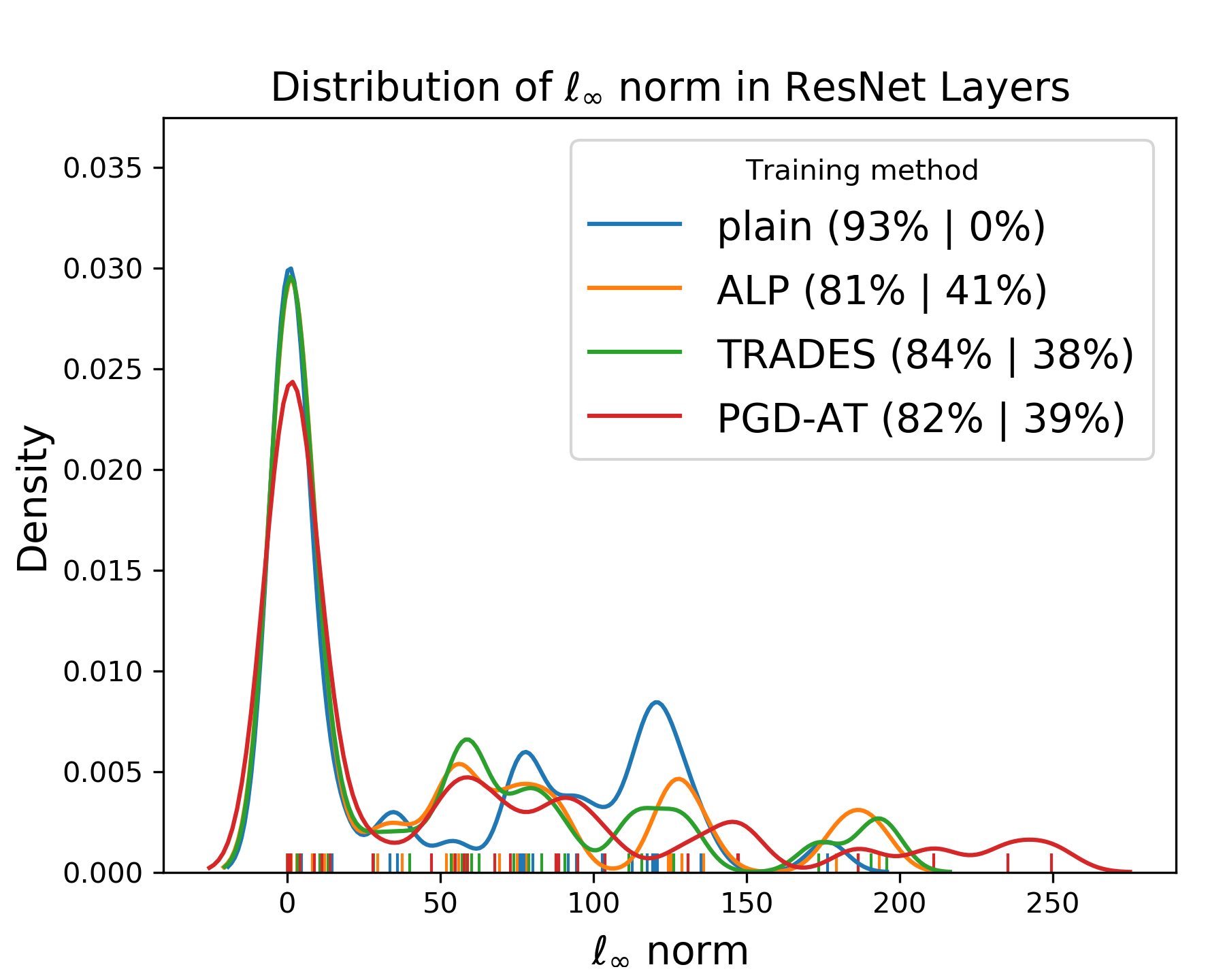}\\
		\includegraphics[width=0.5\linewidth]{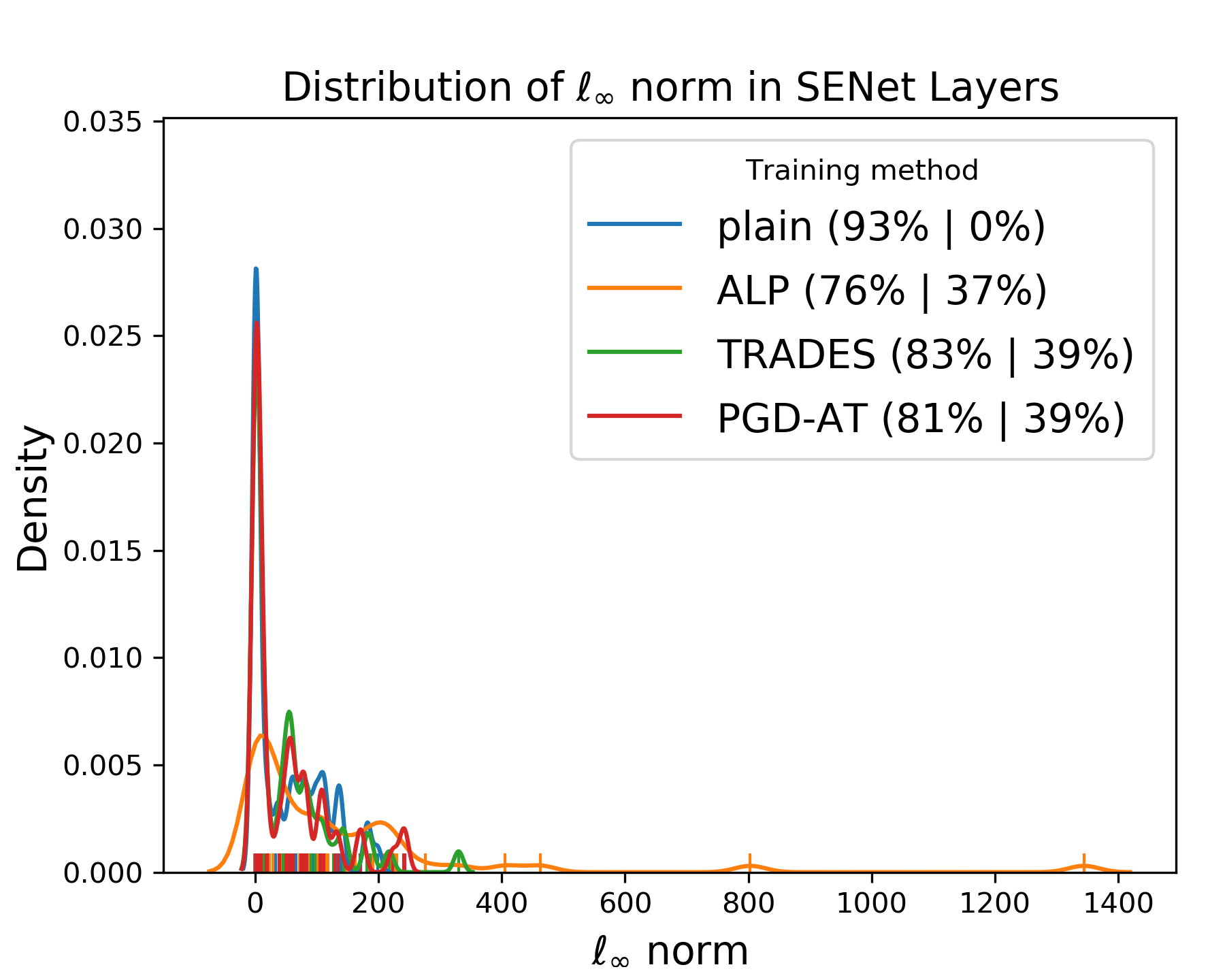}\hfil
		\includegraphics[width=0.5\linewidth]{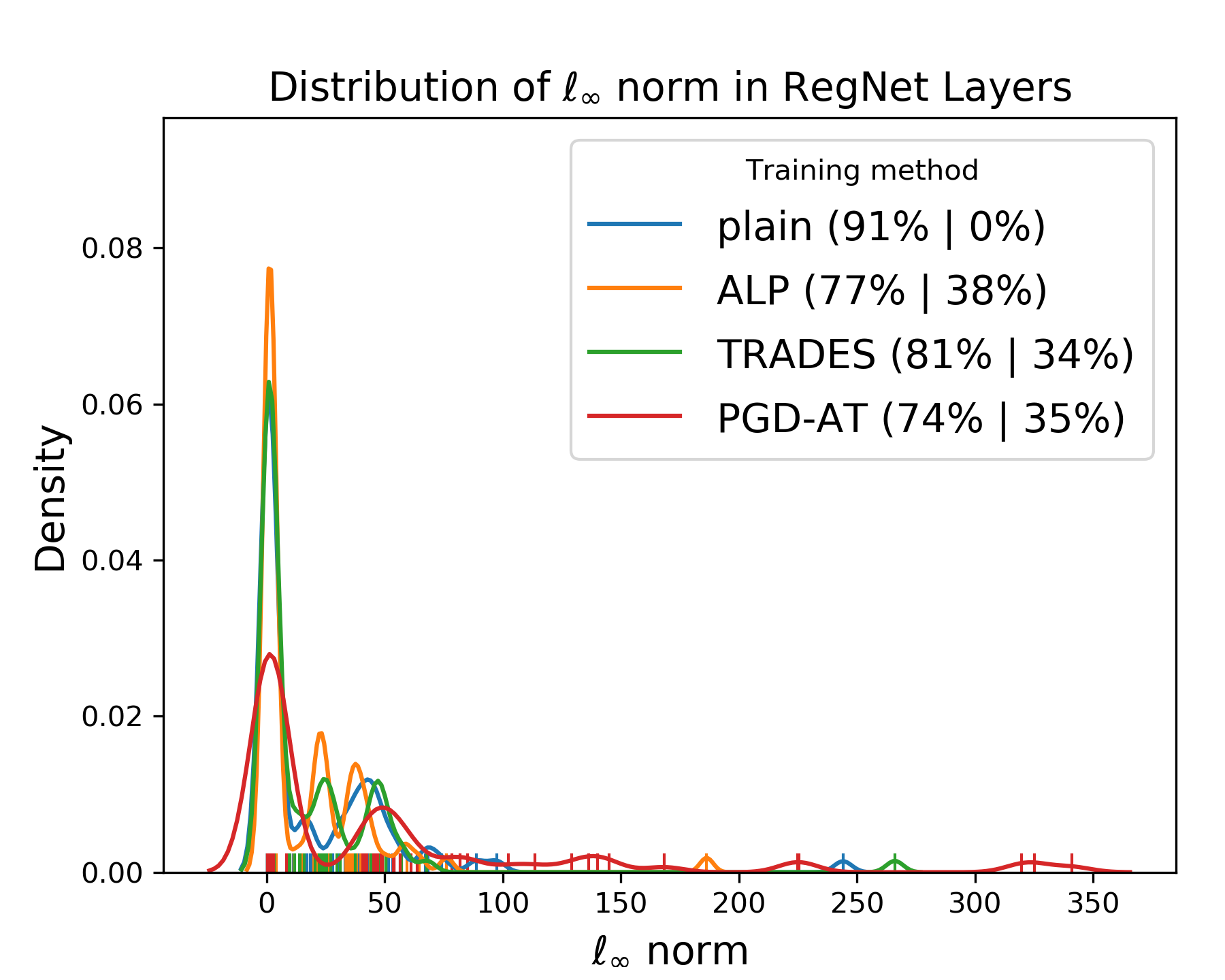}
	\end{center}
	\caption{Comparison of the distribution of norms of the layers of four CNN architectures trained with different adversarial training methods on CIFAR-10. The density is fitted using Gaussian kernel density estimation. The small bars on the bottom of the plots indicate the values of the norms. The two numbers beside each training method are the clean accuracy and robust accuracy, respectively. The robust accuracy is evaluated with standard Auto Attack \cite{croce2020reliable} under $\ell_\infty$ metric with $\epsilon=8/255$.}
	\label{fig:norm}
\end{figure*}

\section{Conclusion and Future Work}
In this paper, we theoretically characterize \norms of convolutional layers and present efficient approaches for computing the exact norms. Our methods are extremely efficient among the existing methods for computing norms of convolutional layers. 
We present norm decay, a novel regularization method, which can improve generalization of CNNs. 
We \emph{prove} that robust classifiers can be realized with neural networks -- a piece of encouraging news to the deep learning community. 

We theoretically analyze the relationship between global Lipschitzness, local Lipschitzness, and the norms of layers. In particular, we show that large norms of layers do not necessarily lead to a large global Lipschitz constant and a large global Lipschitz constant does not necessarily incur small robust accuracy. 
In the experiments, we find that regularizing the norms may not improve adversarial robustness and may even slightly hurt adversarial robustness. 
Moreover, CNNs trained with adversarial training frameworks actually have comparable and even larger layer norms than their non-adversarially robust counterparts, which shows that large norms of layers do not matter. 
Our theoretical result (Proposition~\ref{thm:large}) also suggests that imposing local Lipschitzness on neural nets may be an effective approach in adversarial training, which sheds light on future research. 

\section*{Acknowledgments}
This work was supported by the NSFC under Grant 61976097.

\bibliography{aaai2020}

\clearpage

\appendix
\setcounter{secnumdepth}{2} %

\section*{Appendix}
\section{Proof}
\subsection{Proof of Lemma 1}
For clarity, let $C \in \mathbb{R}^{d_{in} \times h_{in} \times w_{in}}$ denote the 3D input channels of Conv2d. In the vectorization of $C$, We first vectorize $C_{1,:,:}$ (as shown in Figure~\ref{fig:input channel}), and then vectorize $C_{2,:,:}$, and so on, which determines the order of input elements in the vectorized input vector $x$.

\begin{figure*}[!thb]
	\centering
	\begin{subfigure}{0.25\textwidth}
		\centering
		\begin{tikzpicture}[scale=0.3, every node/.style={scale=0.8}]
			\draw[black] (0,0) rectangle (9,9);
			\draw[black] (1,1) rectangle (8,8);
			\draw[step=1,black] (0,4) grid (5,9);
			\foreach \t in {0.5,2.5,4.5}
			{
				\foreach \r in {4.5,6.5,8.5}
				{
					\node[fill=yellow,minimum size=0.33cm] at (\t, \r) {};
				}
			}
			\node at (4.5, 4.5) {$\blacktriangle$};
			\draw[blue,thick] (1,1) rectangle (8,8);
			\draw[decoration={calligraphic brace,amplitude=3pt}, decorate, line width=0.8pt] (5, 9.1) -- (9, 9.1);
			\node at (7, 10) {$w_{in} + 2p_2 - k_2$};
			\draw[decoration={calligraphic brace,amplitude=3pt,mirror,aspect=0.75}, decorate, line width=0.8pt] (9.1, 0) -- (9.1, 4);
			\node at (10.3, 3) {\rotatebox{-90}{$h_{in} + 2p_1 - k_1$}};
		\end{tikzpicture}
		\caption{Kernel at position $P_{c_1 s_1-a_{\max}+a_{\midd}, c_2 s_2-b_{\max}+b_{\midd}}$; $K_{i,j,a_{\max},b_{\max}}$ is multiplied by $\blacktriangle$.}
		\label{fig:kernel1}
	\end{subfigure}
	\hfill
	\begin{subfigure}{0.25\textwidth}
		\centering
		\begin{tikzpicture}[scale=0.3, every node/.style={scale=0.8}]
			\draw[black] (0,0) rectangle (9,9);
			\draw[black] (1,1) rectangle (8,8);
			\draw[step=1,black] (2,2) grid (7,7);
			\foreach \t in {2.5,4.5,6.5}
			{
				\foreach \r in {2.5,4.5,6.5}
				{
					\node[fill=yellow,minimum size=0.33cm] at (\t, \r) {};
				}
			}
			\node at (4.5, 4.5) {$\blacktriangle$};
			\draw[blue,thick] (1,1) rectangle (8,8);
			\draw[decoration={calligraphic brace,amplitude=3pt}, decorate, line width=0.8pt] (-0.1, 7) -- (-0.1, 9);
			\draw[decoration={calligraphic brace,amplitude=3pt}, decorate, line width=0.8pt] (0, 9.1) -- (2, 9.1);
			\draw[decoration={calligraphic brace,amplitude=2pt,mirror}, decorate, line width=0.4pt] (9.1, 8) -- (9.1, 9);
			\draw[decoration={calligraphic brace,amplitude=2pt}, decorate, line width=0.4pt] (8, 9.1) -- (9, 9.1);
			\node at (-1.3, 8) {$c_1 s_1$};
			\node at (1, 9.8) {$c_2 s_2$};
			\node at (10.1, 8.5) {$p_1$};
			\node at (8.5, 9.8) {$p_2$};
		\end{tikzpicture}
		\caption{Kernel at position $P_{c_1 s_1, c_2 s_2}$; $K_{i,j,a_{\midd},b_{\midd}}$ is multiplied by $\blacktriangle$.}
		\label{fig:kernel2}
	\end{subfigure}
	\hfill
	\begin{subfigure}{0.25\textwidth}
		\centering
		\begin{tikzpicture}[scale=0.3, every node/.style={scale=0.8}]
			\draw[black] (0,0) rectangle (9,9);
			\draw[black] (1,1) rectangle (8,8);
			\draw[step=1,black] (4,0) grid (9,5);
			\foreach \t in {4.5,6.5,8.5}
			{
				\foreach \r in {0.5,2.5,4.5}
				{
					\node[fill=yellow,minimum size=0.33cm] at (\t, \r) {};
				}
			}
			\node at (4.5, 4.5) {$\blacktriangle$};
			\draw[blue,thick] (1,1) rectangle (8,8);
			\draw[decoration={calligraphic brace,amplitude=3pt}, decorate, line width=0.8pt] (1, 9.1) -- (8, 9.1);
			\node at (4.5, 10) {$w_{in}$};
			\draw[decoration={calligraphic brace,amplitude=3pt}, decorate, line width=0.8pt] (-0.1, 1) -- (-0.1, 8);
			\node at (-1.3, 4.5) {$h_{in}$};
		\end{tikzpicture}
		\caption{Kernel at position $P_{c_1 s_1+a_{\midd}-a_{\min}, c_2 s_2+b_{\midd}-b_{\min}}$; $K_{i,j,a_{\min},b_{\min}}$ is multiplied by $\blacktriangle$.}
		\label{fig:kernel3}
	\end{subfigure}
	\caption{An illustration of the proof of Lemma~1. The blue rectangle is the input channel \emph{excluding padding}, which is of size $7 \times 7$, i.e., $h_{in}=w_{in}=7$. Outside the blue rectangle is padding ($p_1=p_2=1$). The grids are kernel elements (kernel size $k_1=k_2=5$). Strides are $s_1=s_2=2$. The indices set of the yellow kernel elements is $\mathcal{A}_{(a_{\min},b_{\min})} \in \mathcal{S}$ where $a_{\min}=b_{\min}=1$. All yellow kernel elements are multiplied by $\blacktriangle$ during convolution, which indicates $\mathcal{G}_{ij} \supseteq \mathcal{K}_{ij}$.}
	\label{fig:lemma1}
\end{figure*}
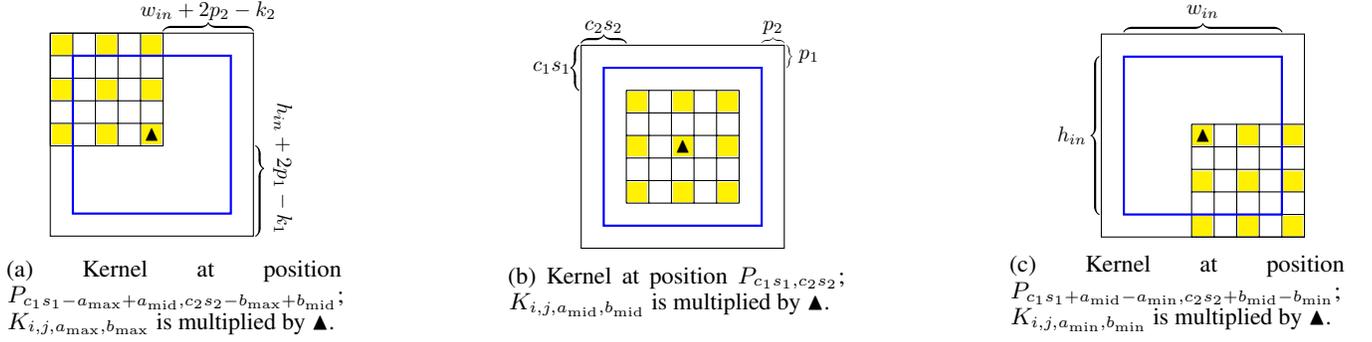

\begin{proof}[Proof of Lemma~\ref{lem:kernel}]
	The vectorized convolution for $\conv$ is $\conv(x) = Mx$, where $x$ is the vectorization of input channels and the length of $x$ is $d_{in}h_{in}w_{in}$. By inspecting the convolution operation, we first note that every nonzero element in $M$ is a kernel element in $K$. 
	By the matrix-vector multiplication $Mx = \sum_{n=1}^{d_{in} h_{in} w_{in}} x_{n} M_{:,n}$, we have, for every $n = 1, \dots, d_{in} h_{in} w_{in}$, $M_{:,n}$ is multiplied by $x_{n}$, which is an element in the $j$-th input channel $C_{j,:,:}$, where $j = \lceil n / (h_{in} w_{in}) \rceil$. %
	For every $i = 1, \dots, d_{out}$, the kernel slice $K_{i,j,:,:}$ is convolved with $C_{j,:,:}$ that contains $x_{n}$. 
	Let $\mathcal{G}_{i}$ be the set of kernel elements in $K_{i,j,:,:}$ that are multiplied by $x_{n}$. Let $(a, b)$ be the smallest indices\footnote{Both $a$ and $b$ are the smallest.} such that $K_{i,j,a,b}\in \mathcal{G}_{i}$. For all $K_{i,j,c,d} \in \mathcal{G}_{i}$, $(c, d)$ must satisfy $(a - c) \equiv 0 \Mod{s_1}$ and $(b - d) \equiv 0 \Mod{s_2}$, i.e., their vertical (resp. horizontal) distance must be a multiple of the vertical (resp. horizontal) stride of $\conv$. Then $(c,d) \sim (a,b)$. Besides, the total vertical (resp. horizontal) distance the kernel can possibly shift on the input channel $C_{j,:,:}$ must be smaller than $h_{in} + 2p_1 - k_1$ (resp. $w_{in} + 2p_2 - k_2$) (see Figure~\ref{fig:kernel1}). Thus $(c, d)$ must satisfy $0 \leq c-a \leq h_{in} + 2p_1 - k_1$ and $0 \leq d-b \leq w_{in} + 2p_2 - k_2$. 
	Therefore, by the construction of $\mathcal{S}$, $\mathcal{G}_{i} \subseteq \{K_{i,j, k, t} \colon (k, t) \in \mathcal{A}_{(a,b)}\}$ where $\mathcal{A}_{(a,b)} \in \mathcal{S}$. Then $\nz(M_{:,n}) = \cup_{i=1}^{d_{out}} \mathcal{G}_{i} \subseteq \{K_{i,j, k, t} \colon 1 \leq i \leq d_{out}, (k, t) \in \mathcal{A}_{(a,b)}\}$, which proves the first claim in Lemma~1.
	
	In the following proof, for a kernel slice\footnote{For simplicity, a kernel slice is referred to as kernel in the text that follows.} $K_{i,j,:,:}$, we use the coordinates of its upper left corner on the input channel $C_{j,:,:}$ to indicate its position. For example, at the beginning of convolution, the kernel is at position $P_{0,0}$. Note that the coordinates of kernel are always multiples of strides.
	By Assumption~1, we have $k_1 + c_1 s_1 - p_1 \leq h_{in}$ and $k_2 + c_2 s_2 - p_2 \leq w_{in}$. Then $P_{c_1 s_1, c_2 s_2}$ is a legitimate kernel position.\footnote{By legitimate kernel position, we mean the kernel is within the boundary of input channels (including padding, if any) and the coordinates of kernel are multiples of strides.} At position $P_{c_1 s_1, c_2 s_2}$, all kernel elements are multiplied by some input elements \emph{but not padding elements} (see Figure~\ref{fig:kernel2}).
	For any $\mathcal{A} \in \mathcal{S}$, let $a_{\max} = \max \{a \colon (a,b) \in \mathcal{A}\}$ and $b_{\max} = \max \{b \colon (a,b) \in \mathcal{A}\}$, and let $a_{\min} = \min \{a \colon (a,b) \in \mathcal{A}\}$ and $b_{\min} = \min \{b \colon (a,b) \in \mathcal{A}\}$. Let $r_1$ and $r_2$ be the largest integers such that $r_1 s_1 \leq h_{in} + 2p_1 - k_1$ and $r_2 s_2 \leq w_{in} + 2p_2 - k_2$. By the definition of $\mathcal{S}$, we have $a_{\max}-a_{\min} \leq r_1 s_1$ and $b_{\max}-b_{\min} \leq r_2 s_2$.
	Let $a_{\midd}=\max(a_{\min}, a_{\max}-c_1 s_1)$ and $b_{\midd}=\max(b_{\min}, b_{\max}-c_2 s_2)$. Then we have $(a_{\midd}, b_{\midd}) \in \mathcal{A}$ because $(a_{\midd}, b_{\midd}) \sim (a_{\max}, b_{\max})$ and $0 \leq a_{\midd} - a_{\min} \leq h_{in} + 2p_1 - k_1$ and $0 \leq b_{\midd} - b_{\min} \leq w_{in} + 2p_2 - k_2$. Suppose when the kernel is at position $P_{c_1 s_1, c_2 s_2}$, for any $i$ such that $1 \leq i \leq d_{out}$ and any $j$ such that $1 \leq j \leq d_{in}$, the kernel element $K_{i,j,a_{\midd},b_{\midd}}$ is multiplied by the element $\blacktriangle$ on the $j$-th input channel.\footnote{Assumption 1 ensures that $\blacktriangle$ is indeed an input element instead of a padding element, and thus $\blacktriangle$ is an element of $x$.} Then when the kernel is at position $P_{c_1 s_1-a_{\max}+a_{\midd}, c_2 s_2-b_{\max}+b_{\midd}}$, the kernel element $K_{i,j,a_{\max},b_{\max}}$ is multiplied by $\blacktriangle$. And when the kernel is at position $P_{c_1 s_1+a_{\midd}-a_{\min}, c_2 s_2+b_{\midd}-b_{\min}}$, the kernel element $K_{i,j,a_{\min},b_{\min}}$ is multiplied by $\blacktriangle$. 
	To show the last two claims are true, we need to show $P_{c_1 s_1-a_{\max}+a_{\midd}, c_2 s_2-b_{\max}+b_{\midd}}$ is a legitimate kernel position. We note that 
	\begin{equation}
		\begin{gathered}
			c_1 s_1-a_{\max}+a_{\midd} = \\ 
			\begin{cases}
				c_1 s_1-a_{\max}+a_{\min} \geq 0 & \mbox{if } a_{\min} \geq a_{\max}-c_1 s_1\\
				0 & \mbox{if } a_{\min} < a_{\max}-c_1 s_1
			\end{cases}
		\end{gathered}
	\end{equation}
	which shows that $c_1 s_1-a_{\max}+a_{\midd} \geq 0$ and is a multiple of stride $s_1$. Similarly, $c_2 s_2-b_{\max}+b_{\midd} \geq 0$ and is a multiple of stride $s_2$. Thus $P_{c_1 s_1-a_{\max}+a_{\midd}, c_2 s_2-b_{\max}+b_{\midd}}$ is a legitimate kernel position.
	To see $P_{c_1 s_1+a_{\midd}-a_{\min}, c_2 s_2+b_{\midd}-b_{\min}}$ is a legitimate kernel position, we note that 
	\begin{equation}
		\begin{gathered}
			c_1 s_1+a_{\midd}-a_{\min} = \\
			\begin{cases}
				c_1 s_1 & \mbox{if } a_{\min} \geq a_{\max}-c_1 s_1\\
				a_{\max}-a_{\min} \leq r_1 s_1 & \mbox{if } a_{\min} < a_{\max}-c_1 s_1
			\end{cases}
		\end{gathered}
	\end{equation}
	Similarly, $ c_2 s_2+b_{\midd}-b_{\min} = c_2 s_2$ or $=b_{\max}-b_{\min} \leq r_2 s_2$.
	Since $P_{c_1 s_1, c_2 s_2}$, $P_{c_1 s_1, r_2 s_2}$, $P_{r_1 s_1, c_2 s_2}$, and $P_{r_1 s_1, r_2 s_2}$ are legitimate kernel positions, $P_{c_1 s_1+a_{\midd}-a_{\min}, c_2 s_2+b_{\midd}-b_{\min}}$ is also a legitimate kernel position.
	
	Since both $K_{i,j,a_{\max},b_{\max}}$ and $K_{i,j,a_{\min},b_{\min}}$ are multiplied by $\blacktriangle$, then for all $(c,d) \in \mathcal{A}$, $K_{i,j,c,d}$ is multiplied by $\blacktriangle$. Let $\mathcal{G}_{ij}$ be the set of kernel elements in $K_{i,j,:,:}$ that are multiplied by $\blacktriangle$ and let $\mathcal{K}_{ij} \ceq \{K_{i,j, k, t} \colon (k, t) \in \mathcal{A}\}$. Then $\mathcal{G}_{ij} \supseteq \mathcal{K}_{ij}$. Note that this is true for all $i$ such that $1 \leq i \leq d_{out}$ and all $j$ such that $1 \leq j \leq d_{in}$. Let $M_{:,n}$ be the column of $M$ such that $M_{:,n}$ is multiplied by $\blacktriangle$ in $\conv(x)=Mx$. Clearly, $(j-1)h_{in}w_{in} < n \leq j h_{in}w_{in}$. Recall that $\nz(M_{:,n})$ is the set of kernel elements that are multiplied by $\blacktriangle$. And note that for the 2D multi-channel convolution $\conv$, $\blacktriangle$ is convolved with kernel slices $K_{i,j,:,:}$ for all $i$ such that $1 \leq i \leq d_{out}$. Then $\nz(M_{:,n}) = \cup_{i=1}^{d_{out}}\mathcal{G}_{ij} \supseteq \cup_{i=1}^{d_{out}}\mathcal{K}_{ij} = \{K_{i,j, k, t} \colon 1 \leq i \leq d_{out}, (k, t) \in \mathcal{A}\}$, which completes the proof.
\end{proof}

\subsection{Proof of Theorem 1}
\begin{proof}[Proof of Theorem~\ref{thm:norm}]
	Let $\mathcal{F} \ceq \{\nz(M_{:,n}) \colon 1 \leq n \leq d_{in}h_{in}w_{in}\}$ and $\mathcal{T}_{j}^{\mathcal{A}} \ceq \{K_{i,j, k, t} \colon 1 \leq i \leq d_{out}, (k, t) \in \mathcal{A}\}$, and let $\mathcal{H} \ceq \{\mathcal{T}_{j}^{\mathcal{A}} \colon 1\leq j \leq d_{in}, \mathcal{A} \in \mathcal{S}\}$. Define a function $\abs$ from sets of real numbers to non-negative numbers $\abs \colon \mathcal{C} \mapsto \sum_{c \in \mathcal{C}} |c|$. Let $\mathcal{R} \ceq \mathcal{F} \cup \mathcal{H}$ and $\mathcal{W} \ceq \{\abs(\mathcal{C}) \colon \mathcal{C} \in \mathcal{R}\}$. Then $\mathcal{W}$ is bounded above by $\abs(\set(K))$, where $\set(K)$ is the set of all elements of 4D kernel $K$, as we now explain. For every $\mathcal{C} \in \mathcal{F}$, by Lemma~1 we have $\mathcal{C} \subseteq \mathcal{B}$ for some $\mathcal{B} \in \mathcal{H}$, and thus $\abs(\mathcal{C}) \leq \abs(\mathcal{B})$. But for every $\mathcal{B} \in \mathcal{H}$, $\mathcal{B} \subseteq \set(K)$ and thus $\abs(\mathcal{B}) \leq \abs(\set(K))$. Then $\abs(\mathcal{C}) \leq \abs(\set(K))$, which proves the last claim. Since $\mathcal{W}$ is a finite set, $\max \mathcal{W} = \sup \mathcal{W} < \infty$. Then there exists a set $\mathcal{C} \in \mathcal{R}$ such that $\abs(\mathcal{C}) = \max \mathcal{W}$. 
	Suppose $\mathcal{C} \in \mathcal{F}$. Then by Lemma~1 there exists $\mathcal{B} \in \mathcal{H}$ such that $\mathcal{C} \subseteq \mathcal{B}$, and thus $\abs(\mathcal{C}) \leq \abs(\mathcal{B})$. However, since $\abs(\mathcal{C}) = \max \mathcal{W}$, we also have $\abs(\mathcal{C}) \geq \abs(\mathcal{B})$. Thus $\abs(\mathcal{C}) = \abs(\mathcal{B})$. 
	On the other hand, suppose $\mathcal{C} \in \mathcal{H}$. Then by Lemma~1 there exists $\mathcal{B} \in \mathcal{F}$ such that $\mathcal{C} \subseteq \mathcal{B}$, and thus $\abs(\mathcal{C}) \leq \abs(\mathcal{B})$. However, since $\abs(\mathcal{C}) = \max \mathcal{W}$, we also have $\abs(\mathcal{C}) \geq \abs(\mathcal{B})$. Thus $\abs(\mathcal{C}) = \abs(\mathcal{B})$. 
	The last two results show that there are always a pair of sets $\mathcal{C} \in \mathcal{H}$ and $\mathcal{B} \in \mathcal{F}$ such that $\abs(\mathcal{C}) = \abs(\mathcal{B}) = \max \mathcal{W}$. Then $\| \conv \|_{1} = \|M\|_{1} = \max_{n} \abs(\nz(M_{:,n})) = \abs(\mathcal{B}) = \abs(\mathcal{C}) = \max_{1 \leq j \leq d_{in}} \max_{\mathcal{A} \in \mathcal{S}} \sum_{(k,t) \in \mathcal{A}} \sum_{i=1}^{d_{out}} | K_{i,j,k,t} |$.
	
	Let $y=\conv(x) = Mx$. Then $y_{n} = \langle M_{n,:}, x \rangle$. We note that, for all elements $y_{n}$ on output channels, $y_{n}$ is also the result of a kernel slice $K_{k,:,:,:}$ being convolved with a part of the input channels $C_{:, i:i+k_1, j:j+k_2}$, where $k=\lceil n/(h_{out}w_{out}) \rceil$ and $C$ is the input channels including padding.
	By Assumption~1, when $i=c_1 s_1$ and $j=c_2s_2$, $\set(C_{:, i:i+k_1, j:j+k_2}) \subseteq \set(D)$ where $D$ is the input channels excluding padding (see Figure~\ref{fig:kernel2} where the blue rectangle is a slice of $D$). In this case, it is clear that $\nz(M_{n,:}) = \set(K_{k,:,:,:})$. 
	When the part of input channels $C_{:, i:i+k_1, j:j+k_2}$ being convolved with $K_{k,:,:,:}$ includes padding, $\nz(M_{n,:}) \subset \set(K_{k,:,:,:})$, because $x = \vect(D)$ does not include padding elements (see the matrix in Figure~1 for an illustration). 
	Note that, as convolution produces output elements $y_{n}$ one by one, it iterates all kernel slices $K_{k,:,:,:}$ for $k$ in the range $[1, d_{out}]$. Thus, $\max_{n} \abs(\nz(M_{n,:})) = \max_{1 \leq k \leq d_{out}} \abs(\set(K_{k,:,:,:}))$.
	Then, $\| \conv \|_{\infty} = \|M\|_{\infty} = \max_{n} \abs(\nz(M_{n,:})) = \max_{1 \leq k \leq d_{out}} \abs(\set(K_{k,:,:,:})) = \max_{1 \leq i \leq d_{out}} \sum_{j=1}^{d_{in}} \sum_{k=1}^{k_1} \sum_{t=1}^{k_2} | K_{i,j,k,t} |$.
	
	By the result we have just obtained, for every output element $y_{n} = \langle M_{n,:}, x \rangle$, $\nz(M_{n,:}) = \set(K_{k,:,:,:})$ or $\nz(M_{n,:}) \subset \set(K_{k,:,:,:})$. And for a fixed $k$ such that $1 \leq k \leq d_{out}$, $K_{k,:,:,:}$ performs exactly $h_{out}w_{out}$ times convolution to produce $h_{out}w_{out}$ elements on the output channels (see the matrix in Figure~\ref{fig:conv} for an illustration). Therefore,
	\begin{align}
		\|M\|_{\mathrm{F}} &=\bigg(\sum_{n=1}^{d_{out}h_{out}w_{out}} \sum\big\{t^2\colon t\in \nz(M_{n,:})\big\}\bigg)^{\frac{1}{2}}\\ 
		\leq &\bigg(\sum_{i=1}^{d_{out}} h_{out} w_{out} \sum \big\{t^2 \colon t \in \set(K_{i,:,:,:})\big\}\bigg)^{\frac{1}{2}}\\ 
		= &\bigg(h_{out} w_{out} \sum_{i=1}^{d_{out}} \sum_{j=1}^{d_{in}} \sum_{k=1}^{k_1} \sum_{t=1}^{k_2} | K_{i,j,k,t} |^2\bigg)^{\frac{1}{2}}
	\end{align}
	The fact that $\|\conv\|_{2} = \|M\|_{2} \leq \|M\|_{\mathrm{F}}$ completes the proof.
\end{proof}

\subsection{Some Remarks of Theorem 1}
\begin{remark}
	Following the methods in the proof of Theorem~\ref{thm:norm}, we can compute $\|M\|_{\mathrm{F}}$ exactly, though the formula for $\|M\|_{\mathrm{F}}$ might be complicated.
\end{remark}
\begin{remark}
	If there is no padding, then for all $n$ such that $1 \leq n \leq d_{out}h_{out}w_{out}$, $\nz(M_{n,:}) = \set(K_{k,:,:,:})$ for some $k$. Then we have $\|M\|_{\mathrm{F}} = (h_{out} w_{out} \sum_{i=1}^{d_{out}} \sum_{j=1}^{d_{in}} \sum_{k=1}^{k_1} \sum_{t=1}^{k_2} | K_{i,j,k,t} |^2)^{\frac{1}{2}}$.
	Besides, it is possible that $\|M\|_{2} = \|M\|_{\mathrm{F}}$. If the two conditions hold, the bound for the $\ell_2$ norm is sharp: $\|\conv\|_{2} = (h_{out} w_{out} \sum_{i=1}^{d_{out}} \sum_{j=1}^{d_{in}} \sum_{k=1}^{k_1} \sum_{t=1}^{k_2} | K_{i,j,k,t} |^2)^{\frac{1}{2}}$.
\end{remark}

\subsection{Proof of Proposition~1}
\begin{proof}
	Consider an $L$-layer feedforward network with ReLU activation (denoted by $\sigma(\cdot)$) where the weight matrices of all layers are diagonal matrices (without bias for simplicity) and denote the diagonal of the weight matrix of the $i$-th layer as $\bv{d}_i$. In the network there are two consecutive layers where $\bv{d}_j \odot \bv{d}_{j+1} = \bv{0}$, where $\odot$ denotes element-wise multiplication. Denote the input of the $j$-th layer as $\bv{x}_{j-1}$. Then the output of $j$-th layer is $\bv{x}_{j}=\sigma(\bv{x}_{j-1} \odot \bv{d}_j )$. And $\bv{x}_{j+1}=\sigma(\bv{x}_{j} \odot \bv{d}_{j+1} )$. For any input $\bv{x}_{0}\in\mathbb{R}^n$, we have $\bv{x}_{j+1} \equiv \bv{0}$. Thus, the output of the entire network is always $\bv{0}$, which means its Lipschitz constant is 0. Since for all $i\in[L]$, at least one element in $\bv{d}_i$ can be arbitrarily large, then the norm of each layer can be arbitrarily large, which completes the proof.
\end{proof}

\subsection{Proof of Theorem~2}
\begin{proof}
	Without loss of generality, assume $\mc{X} \subset [0, 1]^n$. 
	Define the robust cover for class~$c$ as $\mv{R}{c} \cq \cup_{x \in \mv{X}{c}} \ball(x, \epsilon)$. Let $\cl(\mv{R}{c})$ be the closure of $\mv{R}{c}$. Then we can show that for every pair of two classes $i \neq j$, $\cl(\mv{R}{i}) \cap \cl(\mv{R}{j}) = \emptyset$. First note that $\mv{R}{i} \cap \mv{R}{j} = \emptyset$ by Assumption~\ref{assum:separate}. Then it suffices to show that for every limit point $p$ of $\mv{R}{i}$, $p \notin \cl(\mv{R}{j})$, which will be proved by contradiction. 
	Suppose $p$ is a limit point of $\mv{R}{i}$ and $p \in \cl(\mv{R}{j})$. Then for all $\xi > 0$, there exist $r^{(i)} \in \mv{R}{i}$ and $r^{(j)} \in \mv{R}{j}$ such that $d(p, r^{(i)}) \leq \xi$ and $d(p, r^{(j)}) \leq \xi$. Since there exist $x^{(i)} \in \mv{X}{i}$ and $x^{(j)} \in \mv{X}{j}$ such that $d(x^{(i)}, r^{(i)}) \leq \epsilon$ and $d(x^{(j)}, r^{(j)}) \leq \epsilon$. By triangle inequality, we have $d(x^{(i)}, p) \leq \xi + \epsilon$ and $d(x^{(j)}, p) \leq \xi + \epsilon$. Using triangle inequality again, we have $d(x^{(i)}, x^{(j)}) \leq 2(\xi + \epsilon)$. Then $\sup d(x^{(i)}, x^{(j)}) \to 2 \epsilon$ as $\xi \to 0$, which implies $\inf\{d(x^{(i)}, x^{(j)}) \colon x^{(i)} \in \mv X i, x^{(j)} \in \mv X j\} \leq 2\epsilon$. This contradicts Assumption~\ref{assum:separate} and thus for every limit point $p$ of $\mv{X}{i}$, $p \notin \cl(\mv{R}{j})$. By symmetry, we have for every limit point $p$ of $\mv{X}{j}$, $p \notin \cl(\mv{R}{i})$, Thus, $\cl(\mv{R}{i}) \cap \cl(\mv{R}{j}) = \emptyset$ for any $i \neq j$. 
	
	Let $\mathrm{1}_{\cl(\mv{R}{c})}$ be the indicator function of the set $\cl(\mv{R}{c})$, i.e., $\mathrm{1}_{\cl(\mv{R}{c})}(x)=1$ if $x \in \cl(\mv{R}{c})$ else $=0$. Let $\overline{\mc X}=\cup_{c=1}^{C}\cl(\mv{R}{c})$ and define a function $h \colon \overline{\mc X} \to \mc{Y}$ by $h(x) = \sum_{c=1}^{C} c \cdot \mathrm{1}_{\cl(\mv{R}{c})}(x)$. Note that $h$ can correctly predict the labels of points in the set $\overline{\mc X}$ and thus have robust accuracy 1. 
	
	We now show that $h$ is  continuous on $\overline{\mc{X}}$.
	For all $x \in \overline{\mc{X}}$, we have $x \in \cl(\mv{R}{c})$ for some c. Then there exits $\delta>0$ such that $\ball(x, \delta) \cap \cl(\mv{R}{j}) = \emptyset$ for all $j \neq c$ (because otherwise $x$ would be a limit point of $\cl(\mv{R}{j})$ and $\cl(\mv{R}{j}) \cap \cl(\mv{R}{c}) \neq \emptyset$. Let $V= \ball(x, \delta) \cap \overline{\mc{X}}$ then $V \subset \cl(\mv{R}{c})$. Thus for all $s\in V$, $|h(x)-h(s)|=|c-c|=0<\epsilon$ for all $\epsilon>0$. Thus $h$ is continuous on $\overline{\mc{X}}$.
	
	Note that $\overline{\mc X}$ is closed and bounded and thus compact, and $h$ is continuous on $\overline{\mc X}$, i.e., $h \in C(\overline{\mc X}, \mathbb{R})$. Then, by the Universal Approximation Theorem (Theorem 3.2) in \cite{kidger2020universal}, for all $\zeta > 0$, there exists a feedforward neural network $F\colon \overline{\mc X} \to \mathbb{R}$ with $\rho$ being the activation function such that $\sup_{x \in \overline{\mc X}} |F(x)-h(x)| \leq \zeta$. Let $\zeta = 0.1$. Then the robust accuracy of the neural network $F$ is just the robust accuracy of $h$, which is $\mc{D}(\mc{X})=1$, which is the desired result.
\end{proof}

\subsection{Proof of Proposition~2}
\begin{proof}
	We use the notations in the proof of Theorem~\ref{thm:realizability}. 
	Without loss of generality, assume $\overline{\mc X} \subset [0, 1]^n$. 
	Let $\tilde{\mc X} = \overline{\mc X} \cup [2,3]^n \cup [u, 4]^n$, where $u \in (3,4)$. Then $\overline{\mc X} \cap [2,3]^n = \emptyset$ and $\overline{\mc X} \cap [u, 4]^n = \emptyset$ and $[2,3]^n \cap [u, 4]^n= \emptyset$, and thus $\tilde{\mc X}$ is compact. Let $h(x) = \sum_{c=1}^{C} c \cdot \mathrm{1}_{\cl(\mv{R}{c})}(x) - 1.1\cdot\mathrm{1}_{[2,3]^n}(x) + 0.1\cdot\mathrm{1}_{[u, 4]^n}(x)$. Let $\zeta = 0.1$. Then following the same argument in the proof of Theorem~\ref{thm:realizability}, $h$ is continuous on $\overline{\mc{X}}$, i.e., $h \in  C(\tilde{\mc X}, \mathbb{R})$. Then there exists a feedforward neural network $F$ with $\rho$ being the activation function that achieves robust accuracy $1$. 
	Moreover, since $F$ is composition of continuous functions, $F$ is always continuous on $\mathbb{R}^n$, which allows us to analyze the Lipschitz property of $F$. 
	Consider two points $t_1 = (3, 3, \dots, 3) \in [2,3]^n$ and $t_2 = (u, u, \dots, u)\in [u, 4]^n$. The Lipschitz constant of the neural network $F$ has a lower bound $L=|F(t_1)-F(t_2)|/\|t_1-t_2\|\geq 1 /\|t_1-t_2\|$. As $u \to 3$, $\|t_1-t_2\| \to 0$ and thus $L \to \infty$, which completes the proof. 
\end{proof}

\begin{table*}[bhtp]
	\centering
	\resizebox{\textwidth}{!}{%
		\setlength{\tabcolsep}{2.5pt}
		\begin{tabular}{c|c|c|ccccc|ccccc|ccccc|ccccc}
			\toprule
			\multicolumn{1}{c}{} & & plain & \multicolumn{5}{c|}{weight decay} & \multicolumn{5}{c|}{singular value clipping} & \multicolumn{5}{c|}{$\ell_1$ norm decay} & \multicolumn{5}{c}{$\ell_\infty$ norm decay} \\
			\midrule
			model & ACC & --- & $10^{-1}$ & $10^{-2}$ & $10^{-3}$ & $10^{-4}$ & $10^{-5}$ & 0.1 & 0.5 & 1.0 & 1.5 & 2.0 & $10^{-1}$ & $10^{-2}$ & $10^{-3}$ & $10^{-4}$ & $10^{-5}$ & $10^{-1}$ & $10^{-2}$ & $10^{-3}$ & $10^{-4}$ & $10^{-5}$ \\
			\midrule
			\multirow{2}[0]{*}{vgg}&
			Clean & 90.4 & 52.9 & 91.6 & 91.7 & 90.1 & 90.2 & 85.9 & 87.6 & 89.1 & 90.0 & 89.9 & 77.0 & 88.1 & 91.1 & 90.6 & 90.8 & 86.3 & 91.8 & 91.1 & 90.8 & 90.6
			\\
			& Robust & 60.2 & 17.3 & 56.3 & 60.5 & 60.6 & 60.3 & 55.1 & 48.8 & 52.2 & 54.1 & 56.7 & 55.8 & 56.5 & 62.5 & 61.1 & 60.1 & 47.7 & 56.9 & 60.0 & 60.8 & 60.1
			\\
			\midrule
			\multirow{2}[0]{*}{resnet}&
			Clean & 93.2 & 53.4 & 94.3 & 94.1 & 93.1 & 92.7 & 91.9 & 93.6 & 94.0 & 94.2 & 93.8 & 84.2 & 92.5 & 93.4 & 93.5 & 93.4 & 85.9 & 93.0 & 93.8 & 93.1 & 93.0
			\\
			& Robust & 37.0 & 14.5 & 28.2 & 33.7 & 33.9 & 40.9 & 44.1 & 35.2 & 41.7 & 43.2 & 39.8 & 35.9 & 24.5 & 37.7 & 38.3 & 37.5 & 25.8 & 20.0 & 34.7 & 38.9 & 37.6
			\\
			\midrule
			\multirow{2}[0]{*}{senet}&
			Clean & 93.1 & 10.0 & 94.2 & 93.9 & 93.0 & 92.4 & 90.4 & 93.8 & 94.2 & 93.8 & 94.2 & 78.0 & 92.3 & 93.8 & 93.3 & 93.3 & 86.9 & 93.0 & 93.6 & 92.8 & 93.2
			\\
			& Robust & 35.7 & 10.0 & 23.5 & 32.8 & 37.0 & 34.8 & 31.6 & 30.5 & 35.6 & 35.2 & 37.4 & 42.3 & 33.6 & 36.0 & 38.2 & 36.7 & 36.8 & 28.6 & 31.0 & 37.6 & 37.4
			\\
			\midrule
			\multirow{2}[0]{*}{regnet}&
			Clean & 91.8 & 18.8 & 93.6 & 94.4 & 92.3 & 91.3 & 91.8 & 93.9 & 93.4 & 93.0 & 92.4 & 88.2 & 93.7 & 92.3 & 91.6 & 91.9 & 87.9 & 93.4 & 92.0 & 91.8 & 91.9
			\\
			& Robust & 34.8 & 15.0 & 23.7 & 30.3 & 30.0 & 31.0 & 28.5 & 27.7 & 28.8 & 29.0 & 28.8 & 15.6 & 29.2 & 31.1 & 28.1 & 34.3 & 15.6 & 23.2 & 27.7 & 27.9 & 30.6
			\\
			\bottomrule
	\end{tabular}}
	\caption{Comparison of clean accuracy (\%) and robust accuracy (\%) of 4 CNN models trained with different norm-regularization methods on CIFAR-10. The second row corresponds to the values of regularization parameters. Robust accuracy is evaluated with standard Auto Attack \ccite{croce2020reliable} under $\ell_\infty$ metric at $\epsilon=1/255$.}
	\label{tab:acc2}
\end{table*}

\begin{table*}[!t]
	\centering
	\begin{tabular}{c|c|c|ccccc|ccccc}
		\toprule
		\multicolumn{1}{c}{} & & plain & \multicolumn{5}{c|}{$\ell_1$ norm decay and projecting BN} & \multicolumn{5}{c}{$\ell_\infty$ norm decay and projecting BN} \\
		\midrule
		model & ACC & --- & $10^{-1}$ & $10^{-2}$ & $10^{-3}$ & $10^{-4}$ & $10^{-5}$ & $10^{-1}$ & $10^{-2}$ & $10^{-3}$ & $10^{-4}$ & $10^{-5}$ \\
		\midrule
		\multirow{2}[0]{*}{vgg}&
		Clean & 90.4 & 10.0 & 85.8 & 90.7 & 90.7 & 90.4 & 19.9 & 91.9 & 91.4 & 90.5 & 90.5 \\
		& Robust & 60.2 & 9.8 & 43.5 & 60.7 & 64.9 & 62.1 & 16.5 & 57.3 & 62.4 & 61.4 & 61.8 \\
		\midrule
		\multirow{2}[0]{*}{resnet}&
		Clean & 93.2 & 10.0 & 10.4 & 93.9 & 93.1 & 93.2 & 10.0 & 92.7 & 94.0 & 93.0 & 93.3 \\
		& Robust & 37.0 & 9.8 & 9.6 & 35.9 & 37.9 & 38.0 & 10.0 & 27.8 & 34.3 & 39.6 & 36.5 \\
		\midrule
		\multirow{2}[0]{*}{senet}&
		Clean & 93.1 & 10.0 & 85.0 & 93.8 & 93.5 & 93.5 & 15.7 & 80.4 & 93.8 & 93.3 & 93.2 \\
		& Robust & 35.7 & 8.3 & 18.9 & 29.7 & 34.1 & 33.4 & 15.1 & 17.8 & 26.5 & 32.6 & 33.7 \\
		\midrule
		\multirow{2}[0]{*}{regnet}&
		Clean & 91.8 & 73.1 & 93.3 & 92.5 & 91.9 & 91.8 & 49.8 & 93.5 & 92.2 & 92.2 & 91.7 \\
		& Robust & 34.8 & 16.1 & 27.6 & 32.4 & 33.4 & 31.5 & 15.3 & 16.8 & 29.2 & 31.2 & 31.1 \\
		\bottomrule
	\end{tabular}
	\caption{Comparison of clean accuracy (\%) and robust accuracy (\%) of 4 CNN models trained with $\ell_1$ and $\ell_\infty$ norm decay (and projecting BN norms to 5) on CIFAR-10. The second row corresponds to the values of the regularization parameter $\beta$. Robust accuracy is evaluated with standard Auto Attack \ccite{croce2020reliable} under $\ell_\infty$ metric at $\epsilon=1/255$.}
	\label{tab:acc3}
\end{table*}

\section{More Experimental Results}
Due to space limitation in the main text, we provide more experimental results here. \textbf{Some figures are omitted here to compress the size of this file.} More figures can be found at \url{https://drive.google.com/file/d/1DxJPy_mDtHejr8bLJmIPawLreqwE7yaT/view?usp=sharing}.

\subsection{Clean and Robust Accuracy of CNNs with Norm-Regularization}
In the experiments, we test the regularization parameter of norm decay (ND) and weight decay (WD) in $\{10^{-5}, \dots, 10^{-1}\}$ and test the parameter of singular value clipping (SVC) in $\{2.0, 1.5, 1.0, 0.5, 0.1\}$, while in the main text the strongest regularization (corresponds to 0.1 for SVC and $10^{-1}$ for ND and WD) is omitted due to space limitation. The complete result is shown in Table~\ref{tab:acc2}. 
Again, we notice that regularization can improve generalization but has little effect on adversarial robustness. 
Besides, regularization that is too strong basically reduces both standard accuracy and robust accuracy.

\subsection{How the Norms Change During Training under Norm-Regularization?}\label{sec:norm-change}
We calculate the $\ell_1$ norm (or the $\ell_\infty$ norm when applying $\ell_\infty$ norm decay) of all layers during the training of CNNs under norm-regularization and plot the results in Figure~5,~6,~and~7. Here, we only show the norms of the ResNet layers since the other three models present similar patterns in the change of the norms. Apart from convolutional and fully connected layers, we also show the norms of batch normalization layers (BN) \ccite{ioffe2015batch}. The batch normalization is applied as follows: 
\begin{equation}
	\hat{x}_{i} = \gamma_{i} \frac{x_{i}-\mu_{i}}{\sigma_{i}} + \beta_{i},
\end{equation}
where $i$ is the index of features, and $\mu_{i}$ and $\sigma_{i}$ are respectively the mean and standard deviation of the $i$-th feature. 
Since $\mu_{i}$ and $\sigma_{i}$ are fixed at inference time, BN is simply an affine transformation and its $\ell_{1}$, $\ell_{2}$, and $\ell_{\infty}$ norms are $\max_{i} \gamma_i / \sigma_{i}$. 

We can see in Figure~5 that all regularization methods can effectively regularize the norms of convolutional and fully connected layers when the regularization parameter is set properly. %
Moreover, the $\ell_1$ norms in SVC (Figure~5 g-k) remain basically the same during training. Since SVC clips the $\ell_2$ norms to a fixed value, it indicates that the $\ell_1$ norm is strongly correlated to the $\ell_2$ norm. %
It shows that our approaches to computing the $\ell_1$ and $\ell_\infty$ norms for convolutional layers are equivalent to computing the $\ell_2$ norms while our methods are much more efficient.  

We notice that in some cases the norms of BN explode, as shown in Figure~6. Since we do not explicitly regularize the norms of BN, it seems that the explosion is compensation for the reduction in the norms of convolutional and fully connected layers. In order to investigate whether regularizing the norms of BN improves adversarial robustness, we further project the $\ell_1$ (also $\ell_\infty$) norms of BN to a fixed value after applying norm decay at each step. The clean and robust accuracy is shown in Table~\ref{tab:acc3} and the change of norms during training is shown in Figure~7. We use projection instead of extending norm decay to BN because we find that norm decay is too ``soft'' to regularize the norms of BN (while projection is a hard way for regularization). 
From Figure~7, we can see that the norms of BN, convolutional layers, and fully connected layers are regularized properly. However, the robust accuracy of the models is still at a low level (basically the same as that without regularization). 
Along with the observation that norm-regularization methods proposed by other authors only \emph{slightly} improve robustness of neural network (as we have discussed in the Introduction), we believe that regularization of norms is ineffective in improving adversarial robustness.

\subsection{Comparison of Norms of Individual Layers}
In the main text, we plot the distribution of the norms of the plain models and adversarially robust models. Here, we compare the norms of the corresponding layers in a model trained with 4 methods, namely, plain (no regularization), ALP, TRADES, PGD-AT. The results are shown in Figure~8-12 (please note that in all the plots, the four bars represent the norms of the plain model, the models trained with ALP, TRADES, and PGD-AT, respectively). The comparison clearly shows that the norms of adversarially robust CNNs are comparable to those of the non-adversarially robust CNNs (plain). Moreover, in RegNet (Figure~9) and SENet (Figure~12), the adversarially robust CNNs even have much larger norms than the non-adversarially robust ones. These results consistently show that large norms do not hurt adversarially robustness. %

\onecolumn

\includepdf[pages=-]{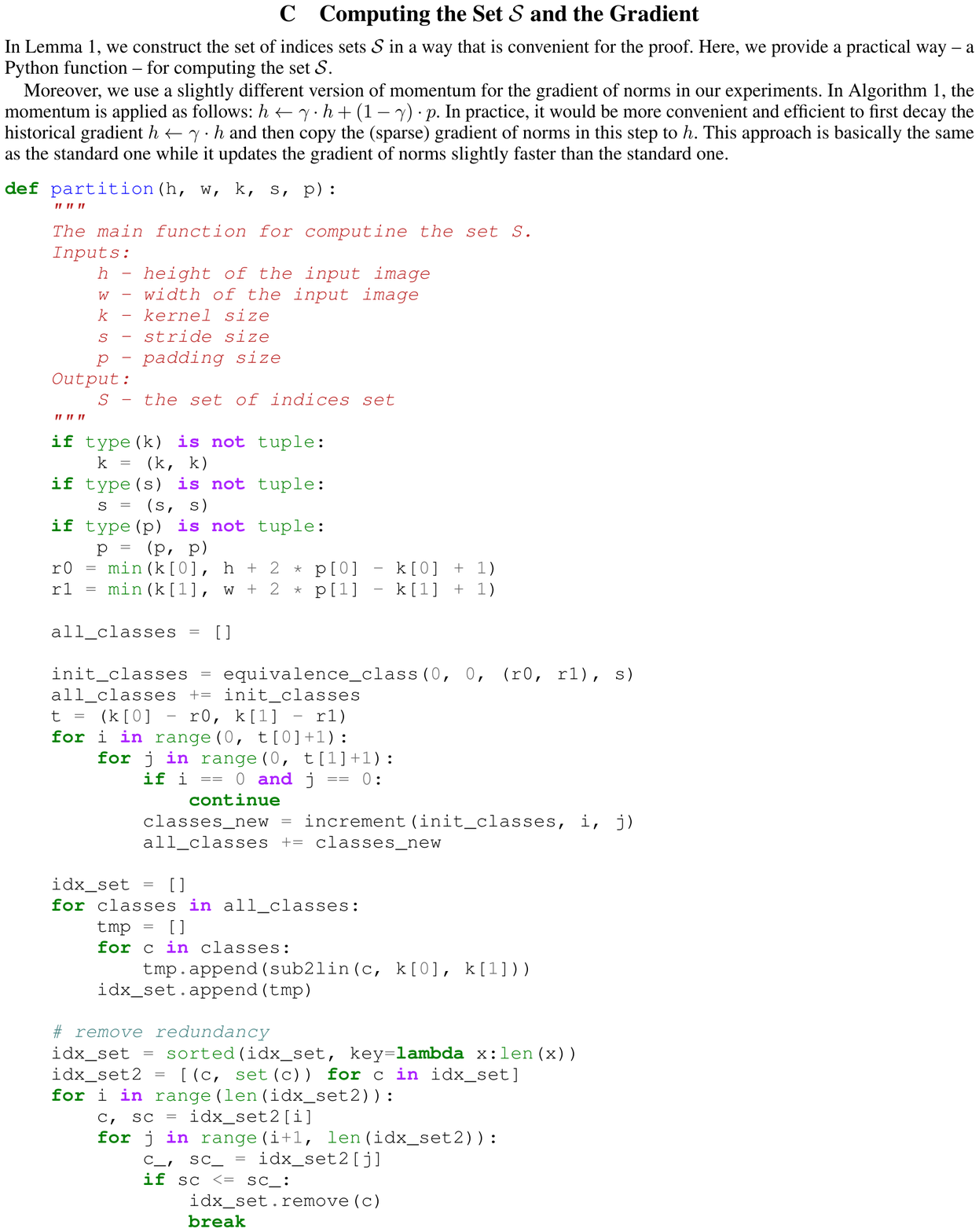}

\end{document}